\begin{document}
	\title{Uncertainty Quantification for Incomplete Multi-View Data Using Divergence Measures}
	\author{Zhipeng Xue$^{\dagger}$,
		Yan Zhang$^{\dagger}$, 
		Ming Li$^{\dagger}$,
        Chun Li$^{*}$, ~\IEEEmembership{Member,~IEEE}, Yue Liu, and Fei Yu,~\IEEEmembership{Fellow,~IEEE}
		\thanks{This research was supported by Guangdong Basic and Applied Basic Research Foundation (No.2024A1515011774).} 
		\thanks{Z. Xue, Y. Zhang and C. Li are with MSU-BIT-SMBU Joint Research Center of Applied Mathematics, Shenzhen MSU-BIT University, Shenzhen, 518172, China. E-mail: xue19991219@gmail.com; za1234yuuy@gmail.com} 
        \thanks{Z. Xue and Y. Liu are with School of Optics and Photonics, Beijing Institute of Technology, Beijing 100081, China. E-mail: liuyue@bit.edu.cn}
        \thanks{M. Li and F. Yu are with Guangdong Laboratory of Artificial Intelligence and Digital Economy (SZ), Shenzhen, 518083, China. E-mail: ming.li@u.nus.edu.}
		\thanks{$^{\dagger}$Co-first authors. *Corresponding author: Chun Li (E-mail: lichun2020@smbu.edu.cn).}
	}
	\markboth{IEEE Transactions on Image Processing}%
	{Shell \MakeLowercase{\textit{et al.}}: Bare Demo of IEEEtran.cls for IEEE Journals}
	\maketitle

	\begin{abstract}
                Existing multi-view classification and clustering methods typically improve task accuracy by leveraging and fusing information from different views. However, ensuring the reliability of multi-view integration and final decisions is crucial, particularly when dealing with noisy or corrupted data. Current methods often rely on Kullback-Leibler (KL) divergence to estimate uncertainty of network predictions, ignoring domain gaps between different modalities. To address this issue, KPHD-Net, based on Hölder divergence, is proposed for multi-view classification and clustering tasks. Generally, our KPHD-Net employs a variational Dirichlet distribution to represent class probability distributions, models evidences from different views, and then integrates it with Dempster-Shafer evidence theory (DST) to improve uncertainty estimation effects. Our theoretical analysis demonstrates that Proper Hölder divergence offers a more effective measure of distribution discrepancies, ensuring enhanced performance in multi-view learning. Moreover, Dempster-Shafer evidence theory, recognized for its superior performance in multi-view fusion tasks, is introduced and combined with the Kalman filter to provide future state estimations. This integration further enhances the reliability of the final fusion results. Extensive experiments show that the proposed KPHD-Net outperforms the current state-of-the-art methods in both classification and clustering tasks regarding accuracy, robustness, and reliability, with theoretical guarantees.
	\end{abstract}
	
	\begin{IEEEkeywords}
		Multi-view learning, Evidential deep learning, Divergence learning, Varitional Dirichlet.
		
	\end{IEEEkeywords}
	
	\IEEEpeerreviewmaketitle

\section{Introduction}
Multi-view learning has become a significant paradigm in machine learning, aiming to integrate information from multiple perspectives or data representations \cite{p18}. By leveraging the complementary information contained in different views, this approach has shown great potential in enhancing the performance of clustering task \cite{p4}. Despite the success of traditional multi-view learning methods, they often struggle with effectively integrating diverse sources of evidence, especially when the quality and relevance of these sources vary dynamically. These methods tend to underestimate uncertainty\cite{p87}, typically assuming equal value for different views or learning a fixed weight factor for each view, which can lead to unreliable predictions, particularly when noisy or corrupted views are present (e.g., information from faulty sensors) \cite{p14,p15}. Consequently, an algorithm is essential that dynamically assigns weights to each modality, accounting for uncertainty to prevent unreliable predictions \cite{p17}. 

To address this issue, enhanced trusted multi-view classification (ETMC) \cite{p18} replaces the Softmax function with a non-negative activation function and uses subjective logic theory \cite{p82} and Dempster-Shafer evidence theory \cite{p26} for reliable uncertainty estimation, achieving excellent results.

In this work, KPHD-Net, based on Hölder divergence \cite{p19} and Kalman filtering \cite{p30}, is proposed. Compared to traditional methods, KPHD-Net integrates the non-negative activation function from ETMC, subjective logic theory, and Dempster-Shafer evidence theory to ensure reliable uncertainty estimation. Additionally, the original KLD \cite{p20} is replaced by Proper Hölder divergence (PHD) to enhance classification accuracy. A comprehensive mathematical justification for PHD's superiority over KLD in multi-view learning is provided. Furthermore, Kalman filtering is combined with Dempster-Shafer evidence theory to yield more robust uncertainty measures. The approach is validated on eight multi-view datasets—four for classification and five for clustering—demonstrating superior performance compared to existing models.

In summary, our work offers the following contributions: 1. The KPHD-Net is developed based on Proper Hölder Divergence, providing a mathematical explanation of PHD's advantages over traditional KLD in multi-view learning. 2. The KPHD-Net is evaluated on two tasks: supervised classification and unsupervised clustering. Nine multi-view datasets are divided into two groups, with four used for classification and five for clustering. Furthermore, noise is introduced to the datasets to test the KPHD-Net's robustness, and the results demonstrate the superiority of the proposed approach. 3. Extensive experiments, including noise addition and ablation studies, are conducted to validate the effectiveness of the KPHD-Net in classification and clustering tasks. This exploration offers new insights into enhancing multi-view classification and clustering models, confirming that the improved objective function significantly boosts classification and clustering efficiency.

\section{Related Work}
\subsection{Multi-view Learning}
Deep learning has been widely applied in various scenarios such as intelligent transportation \cite{li2021self,li2021exploiting,Yan_2025_CVPR}, cross-modal learning \cite{Liu_2025_CVPR, zhao2025favchat, li2025uni}, privacy protection \cite{li2023dr,li2023stprivacy}, and artificial intelligence generated content \cite{li2024instant3d, liu2024realera,zhuang2025vistorybench}.
Similarly, multi-view representation learning has been extensively studied in recent years, with methods broadly categorized into consensus-based and complementary-based approaches \cite{p62}. Consensus-based methods aim to find a common representation that captures the shared information across all views, thereby reducing the complexity of dealing with multiple representations \cite{p63}. Notable examples include Canonical Correlation Analysis (CCA) \cite{p22} and Co-training. Canonical Correlation Analysis (CCA) is a classical method that seeks linear projections of the data in each view such that the correlations between the projected views are maximized. This method has been widely used due to its simplicity and effectiveness in capturing the shared structure among multiple views. Co-training, another influential method, trains separate models on each view and uses the confident predictions from one view to augment the training data for the other view \cite{p25}. This iterative process continues until the models converge, effectively leveraging the consensus between the views to improve learning performance.

\subsection{Dempster-Shafer Evidence Theory (DST)}
DST \cite{p26}, also known as the DST of Evidence, provided a framework for modeling and combining uncertain information. It extends traditional probabilistic approaches by allowing for the representation of partial or uncertain knowledge and combining evidence from multiple sources in a more flexible manner. DST allows beliefs from different sources to be combined by various fusion operators to obtain a new belief that considers all available evidence \cite{p28}. Moreover, Luo et al. \cite{p64} explored the application of variational quantum linear solvers to enhance combination rules within the framework of DST.

\subsection{Hölder Divergence}

The term “Hölder divergence” was derived from Hölder's inequality \cite{p29}. This concept incorporates the idea of inequality tightness and includes two main types: Hölder Statistical Pseudo-Divergence (HPD) and Proper Hölder Divergence (PHD) \cite{p19}. In this work, we focus primarily on PHD, which is defined as follows:
\newtheorem{definition}{\bf{Definition}}
\begin{definition}
    \textbf{(Proper Hölder Divergence, PHD) \cite{p19}} For conjugate exponents \(\alpha, \beta > 0\) and \(\gamma > 0\), the appropriate Hölder divergence (HD) between two density functions \(p(x)\) and \(q(x)\) is defined as follows:
    \begin{equation}
        D_{\alpha,\gamma}^{\mathrm{H}}(p:q)= -\log\left(\frac{\int_{\mathcal{X}}p(x)^{\gamma/\alpha}q(x)^{\gamma/\beta}\mathrm{d}x}
    {(\int_{\mathcal{X}}p(x)^{\gamma}\mathrm{d}x)^{1/\alpha}(\int_{\mathcal{X}}q(x)^{\gamma}\mathrm{d}x)^{1/\beta}}\right).
    \end{equation}
\end{definition}

Due to the several advantages of Hölder divergence over KLD, PHD has been increasingly applied to various machine learning tasks in recent years. For example, in 2015, Hoang et al. \cite{p66} explored the application of Cauchy-Schwarz divergence—a special case of Hölder divergence—in analyzing and comparing Poisson point processes, offering valuable insights into its effectiveness for statistical and machine learning tasks involving these processes. In 2017, Frank et al. \cite{p67} investigated K-means clustering using Hölder divergences, enhancing clustering performance by leveraging the strengths of Hölder divergence over traditional distance measures. Additionally, Pan et al. \cite{p68} utilized Hölder divergence for classification in PolSAR image analysis.
\renewcommand\arraystretch{1.0}
\begin{table}[t]
	\setlength{\belowdisplayskip}{0pt}
	\setlength{\abovedisplayskip}{0pt}
	\setlength{\abovecaptionskip}{0pt}
	\centering
	\scriptsize
	\caption{Main Notations Used in This Work. This table provides a summary of all important symbols and their corresponding definitions as used throughout the manuscript.}
	\setlength{\tabcolsep}{2pt}
	\begin{tabular}{p{3cm}p{5cm}}  
		\toprule [1.0pt]
		Notation&Definition\\
		\midrule[0.5pt]
		$\alpha, \beta,\gamma$ &  The conjugate exponents of Hölder \\
		$D_{\alpha,\gamma}^{\mathrm{H}}(p:q)$ & The Proper Hölder Divergence of $p(x)$ and $q(x)$\\
		$b_k^i$ & Reliability of the $kth$ classification result for the $ith$ modality \\
		${{\rm M}^i} = \left\{ {\{ b_k^i\} _{k = 1}^K,{u^i}} \right\}$&Reliability of the classification result for the ith modality and overall uncertainty\\
		$\left\{ {x_n^m} \right\}_{m = 1}^M,{y_n}$& The $n$ samples with $M$ modalities each, and the labels corresponding to the $n$ samples, respectively\\
		${\lambda _t}, Dir(.|.)$ & Weight parameter and Dirichlet distribution, respectively\\
		\bottomrule[1.0pt]
	\end{tabular}
	\label{tab01}
\end{table}
\subsection{Kalman Filtering}
Kalman filtering is a recursive algorithm used to estimate the state of a linear dynamic system from noisy observations \cite{p30}. It excels at handling Gaussian noise and provides optimal state estimates by combining predictions from a system model with sensor measurements \cite{p31}. For example, Yang et al. \cite{p70} introduced a highly accurate manipulator calibration method that integrates an extended Kalman filter with a residual neural network, enhancing calibration precision and efficiency. Qian et al. \cite{p71} explored observing periodic systems by bridging centralized Kalman filtering with consensus-based distributed filtering, offering a unified approach to improve filtering performance in distributed networks. Su et al. \cite{p72} developed a graph-frequency domain Kalman filtering method designed to manage measurement outliers in industrial pipe networks, improving the accuracy and reliability of monitoring and control systems.

DST, on the other hand, provides a flexible framework for modeling and combining uncertain evidence from multiple sources, accommodating both complete and incomplete information. Integrating Kalman filtering with DS Theory enhances the system's ability to manage non-Gaussian noise and integrate conflicting evidence. This integration leverages Kalman's efficiency in dynamic state estimation and DS's robustness in uncertainty management, resulting in improved accuracy and reliability in environments with dynamic and uncertain data, such as multi-sensor systems.

\begin{figure*}
    \setlength{\belowcaptionskip}{0pt} 
    \setlength{\abovecaptionskip}{0pt} 
    \centering
    \includegraphics[width=0.9\textwidth]{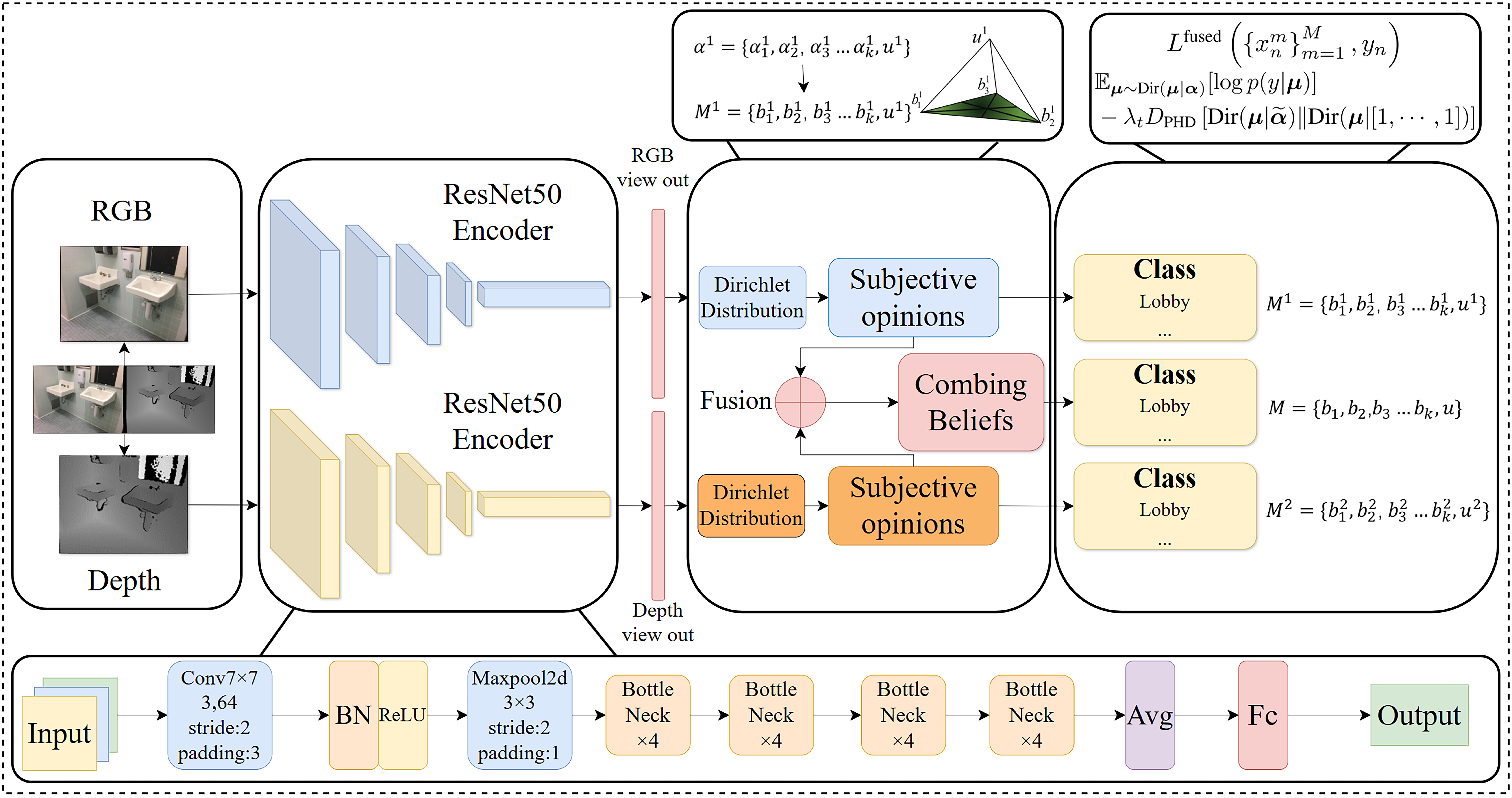}
    \caption{An overview of uncertainty quantification for incomplete multi-view data using proper Hölder divergence measures. This approach highlights the effectiveness of Hölder divergence in capturing the variability and uncertainty inherent in incomplete datasets, enhancing the reliability of multi-view analysis.}
    \label{fig:01}
\end{figure*}

\section{Methodology}
\subsection{Variational Dirichlet Distribution}
The main notations used in this work are presented in Table \ref{tab01}. In machine learning, the representation of compositional data is crucial for addressing multi-class classification problems. Aitchison \cite{p32} introduced the Dirichlet distribution as a primary model for such data. For a classification problem involving $K$ classes, the objective is to develop a function that can generate predicted class labels while minimizing the discrepancy between these predictions and the true labels. Traditionally, deep learning approaches use the softmax operator to convert continuous model outputs into class probabilities. However, this operator often results in overconfident predictions \cite{p18}.

The Dirichlet distribution serves as a versatile and essential tool in probabilistic modeling, particularly in the context of multi-class classification and Bayesian inference. It is the conjugate prior for the multinomial distribution, which simplifies Bayesian inference by ensuring that the posterior distribution retains the same form as the prior \cite{p33}. This property not only facilitates the analytical tractability of Bayesian updates but also supports various applications, including Bayesian statistics, natural language processing, and computer vision. The Dirichlet distribution's key advantages include its flexibility in modeling categorical data, parameter interpretability, smoothing capabilities, and its suitability for generative and hierarchical modeling tasks \cite{p33}.
\begin{algorithm}[t]
	\caption{\small Uncertainty Quantification for Incomplete
Multi-View Data Using Divergence Measures.}
	\label{alg:spl}
	\DontPrintSemicolon
	\small
	\tcp*[f]{\textbf{*Training*}}\\
	\textbf{Input:} Multi-View Dataset: $D = \left\{ {\left\{ {{\rm X}_n^m} \right\}_{m = 1}^M,{y_n}} \right\}_{n = 1}^N$;\\
	\textbf{initialization:} Initialize the parameters of the neural network.\\
	\While{not converged}  
	{ 
		\For {$m=1:M$}{
			(1) $Dir({\mu ^m}|{x^m}) \leftarrow$ variational network output;\\ 
			(2) Subjective opinion ${M^m} \leftarrow Dir({\mu ^m}|{x^m});$}\
		
		(1) Obtain joint opinion ${M^m}$;\\
		(2) Obtain $Dir({\mu ^m}|{x^m})$;\\
		(3) Obtain the overall loss by updating $\alpha$,$\gamma$ and $\left\{ {{\alpha ^v}} \right\}_{v = 1}^V$,$\left\{ {{\gamma ^v}} \right\}_{v = 1}^V$;\\
		(4) Maximize \textbf{objective function} and update the networks with gradient descent;\\
	}
	{\bfseries Output:} networks parameters.\\
	\tcp*[f]{\textbf{*Test*}}\\
	Calculate the joint belief and the uncertainty masses.\\
\end{algorithm}
In multi-view learning, Dirichlet learning offers unique advantages by modeling dependencies between different data views through a stochastic process. It can manage variable-dimensional feature spaces and effectively incorporate prior knowledge, enhancing both classification performance and interpretability \cite{p33}.

To address the issue of overconfidence inherent in the softmax function, a Dirichlet distribution is utilized as a conjugate prior for the multinomial distribution, thereby establishing a predictive distribution. The class probabilities, denoted as $\boldsymbol{\mu}=[\mu_1,\cdots,\mu_K]$, can be interpreted as parameters of a multinomial distribution, where the sum of probabilities is one $\sum_{k=1}^{K}\mu_{k}=1$. This setup characterizes the probability of $K$ mutually exclusive events \cite{p34}. By introducing the Dirichlet distribution, the uncertainty in the predictive distribution can be captured, effectively mitigating the issue of overconfidence. The formulation of the Dirichlet distribution is further expounded upon, starting with the definition of the exponential family to which this distribution is inherently linked. The density function of the Dirichlet distribution is provided in the definition \ref{def_2}.
\begin{definition} 
	\label{def_2}
	(\textbf{Dirichlet Distribution \cite{p33}}) The Dirichlet distribution of order $K$ (where $K\ge 2$) with parameters $\alpha_i>0, i=1,2,3...,K$ is defined by a probability density function with respect to Lebesgue measure on the Euclidean space $R^{K-1}$ as follows: ${\rm{Dirichle}}{{\rm{t}}_n}({\mu _1}, \cdots ,{\mu _K}|{\alpha _1}, \ldots ,{\alpha _K}) = \frac{{\Gamma \left( {\sum\limits_{i = 1}^n {{\alpha _i}} } \right)}}{{\prod\limits_{i = 1}^n \Gamma  ({\alpha _i})}}\prod\limits_{i = 1}^n {\mu _i^{{\alpha _i} - 1}},$ where ${\mu_i} \in {S_K}$, and ${S_K}$ is the standard $K-1$ dimentional simplex, namely, $${{\cal S}_K} = \left\{ {\left( {{\mu _1},{\mu _2},...,{\mu _K}} \right)\mid \sum\limits_{i = 1}^K {{\mu _i}}  = 1,\;0 \le {\mu _1}, \ldots ,{\mu _K} \le 1} \right\},$$ and $\Gamma(.)$ is the gamma function,  defined as: $\Gamma(s)=\int_0^\infty x^{s-1}\mathrm{e}^{-x}\mathrm{~d}x,\quad s>0$.
\end{definition}
\begin{definition} 
	\label{def_5}
	(\textbf{Exponential Family Distribution \cite{p19}}) The probability density function of the Dirichlet distribution is expressed as follows: $p\left( {x;\theta } \right) = \exp \{ {\theta ^ \top }T(x) - F(\theta ) + B(x)\},$ where $\theta$ is the natural parameter, $T(x)$ is the sufficient statistic, $F(\theta)$ is the log-normalizer, and $B(x)$ is the base measure.
\end{definition}

Therefore, based on Definitions \ref{def_2} and \ref{def_5}, the Dirichlet distribution can be expressed in the form of an exponential family distribution as described in Definition \ref{def_6}.
\begin{definition} 
	\label{def_6}
	(\textbf{The Exponential form of the Dirichlet Distribution \cite{p83}}) Exponential formulation of the Dirichlet distribution probability density function can be rewrite as follows: 
	\begin{equation}
		\label{equ_12}
		\begin{array}{l}
			{\rm{Dirichle}}{{\rm{t}}_n}({\mu _1}, \cdots ,{\mu _K}|{\alpha _1}, \ldots ,{\alpha _K})\\
			\begin{array}{*{20}{c}}
				{}
			\end{array} = \exp \left\{ {\sum\limits_i^K {\left( {{\alpha _i} - 1} \right)} \log {\mu _{\rm{i}}} - \left[ {\begin{array}{*{20}{c}}
						{\sum\limits_i^K {\log \Gamma \left( {{\alpha _i}} \right)} }\\
						{ - \log \Gamma \left( {\sum\limits_i^K {{\alpha _i}} } \right)}
				\end{array}} \right]} \right\}.
		\end{array}
	\end{equation}
\end{definition}
Allowing us to obtain the canonical form terms: ${\nabla _\theta }T(\theta ) = \left[ {\begin{array}{*{20}{c}}
		{\psi ({\alpha _1}) - \psi (\sum\limits_{i = 1}^K {{\alpha _i}} )}\\
		\vdots \\
		{\psi ({\alpha _K}) - \psi (\sum\limits_{i = 1}^K {{\alpha _i}} )}
\end{array}} \right],$ $\theta=\boldsymbol{\alpha}$, $T(\boldsymbol{\mu})=ln(\boldsymbol{\mu})$, $F(\eta ) = \sum\limits_{i = 1}^K {\ln } \Gamma ({\alpha _i}) - \ln \Gamma (\sum\limits_{i = 1}^K {{\alpha _i}} )$, $B(\boldsymbol{\mu})=-ln(\boldsymbol{\mu})$, and $\psi$ is the digamma function, defined as: $\psi(x)=\frac{\mathrm{d}}{\mathrm{d}x}\ln\Gamma(x)$.

\subsection{Dempster-Shafer Evidence Theory Combined with Kalman Filtering for Multi-View Learning}
In the ETMC algorithm, modality fusion is primarily based on subjective logic \cite{p37} and DST \cite{p38,p39}. During training, it is essential to quantitatively assess the uncertainty and credibility of each modality, generating specific numerical values\cite{p88}. A simplified evidence theory is then applied to perform the modality fusion. Additionally, subjective logic \cite{p37} is used to evaluate the uncertainty and credibility of classification results from the fused modalities. 

To assess the uncertainty and credibility of each modality, the algorithm employs a Dirichlet distribution, which acts as a distribution over the features derived from the KPHD-Net's classification layer. The confidence in classification outcomes and the quantification of uncertainty are determined through the Dirichlet distribution and subjective logic. Based on these metrics, the algorithm selectively performs modality fusion using evidence theory. 

Furthermore, to generate a Dirichlet distribution, the algorithm replaces the traditional softmax layer with a non-negative activation function layer. The specific steps are as follows:

In a $K$-class classification task, each sample comprises data from $V$ different modalities. For the modality $\mathrm{M}^{1} = \left\{\{b_{k}^{1}\}_{k=1}^{K},u^{1}\right\}$, the uncertainty associated with the confidence in the corresponding classification result can be computed using the Dirichlet distribution. For $\mathrm{M}^{2} = \left\{\{b_{k}^{2}\}_{k=1}^{K},u^{2}\right\}$, the fusion of the modalities, $\stackrel{}{\mathrm{M}}=\mathrm{M}^1\oplus\mathrm{M}^2$, is calculated using the simplified evidence theory. The simplified fusion rules are defined as $b_{k}=\frac{1}{1-C}(b_{k}^{1}b_{k}^{2}+b_{k}^{1}u^{2}+b_{k}^{2}u^{1})$ and $u=\frac{1}{1-C}u^{1}u^{2}$. In this context, each sample includes data from $V$modalities, resulting in $\mathrm{M}=\mathrm{M}^{1}\oplus\mathrm{M}^{2}\oplus\cdots\oplus M^{V}$.

In this study, a method is proposed that combines the Kalman filter with DST to fuse uncertain information from multimodal data. Initially, the \texttt{ds\_bpa} function is used to calculate the basic probability assignment (BPA) for each modality's data and belief. These BPAs include “true”, “false”, and “uncertain” components, representing the likelihood of the classification result being true, false, and uncertain, respectively. To address the uncertainty in sensor data, we utilize the DST to model belief distributions. Specifically, the \texttt{ds\_bpa} function calculates the basic probability assignment (BPA), which quantifies the belief that a particular hypothesis (true or false) is valid based on the available sensor data. The BPA is computed as follows: $\text{bpa(true)} = \text{belief} \times \text{sensor data}$, $\text{bpa(false)} = \text{belief} \times (1 - \text{sensor data}),$ $\text{bpa(uncertain)} = 1 - \text{belief}.$

This function plays a crucial role in incorporating uncertainty into the model by adjusting belief values based on sensor readings.

Subsequently, the \texttt{ds\_combine} function is employed to fuse BPAs from different modalities. This fusion process is based on DS theory's combination rules, effectively integrating information from various modalities to yield a combined belief. The fusion results of BPAs for each modality are calculated through simplified DST rules to produce the final comprehensive belief.

To further enhance the accuracy of the fusion results, a Kalman filter is incorporated. The “true” value from the fused BPAs is used as the observation input \(z\) and processed by the Kalman filter. Specifically, the Kalman filter first predicts the system's state through the \texttt{predict} step, and then updates the state estimate using the \texttt{update} step with the observation input. This process not only smooths the signal but also provides estimates of future states, making the final fused result more reliable and accurate.

The advantage of this approach lies in its combination of DS theory's ability to handle uncertainty and the Kalman filter's dynamic state estimation capability. This combination not only effectively fuses multimodal information and manages uncertainty but also further optimizes the precision of the fusion results through the prediction and update mechanisms of the Kalman filter. Therefore, the method proposed in this study offers an effective solution for multimodal data fusion tasks, particularly in scenarios requiring the handling of significant amounts of uncertain information.

\subsection{Variational Inference for Proper Hölder Divergence-Based Dirichlet Distributions}
A generative model can be expressed as $ p_\theta(x, z) = p_\theta(x|z) p(z) $, where $p_\theta(x|z) $ is the likelihood, and $ p(z) $ is the prior. From the perspective of a Variational Autoencoder (VAE), the true posterior $ p(z|x) $ can be approximated by $ q_\phi(z|x) $. The evidence lower bound (ELBO) for VAE can be formulated as:
\begin{equation}
\label{equ_100}
    \mathcal{L}_{ELBO}(\theta, \phi; x) = \mathbb{E}_{q_\phi(z|x)} [\log p_\theta(x|z)] - D_{KL}(q_\phi(z|x) \| p(z)).
\end{equation}

According to the Cauchy–Schwarz regularized autoencoder \cite{p65}, the objective function with PHD regularization is given by:
\begin{equation}
\label{equ_102}
    \mathcal{L}_{HDR}(\theta, \phi; x) = \mathbb{E}_{q_\phi(z|x)} [\log p_\theta(x|z)] - \lambda D_{\alpha, \gamma}^{H}(q_\phi(z|x) \| p(z)),
\end{equation}
where $ D_{\alpha, \gamma}^{H} $ represents the PHD, and $\lambda $ is the regularization parameter.

\subsection{Key Steps in Evidence Fusion}
The multi-view classification and clustering capabilities of  KPHD-Net stem from its ability to process multiple data modalities and synthesize them into a coherent representation. Our approach utilizes the Dirichlet distribution to represent the uncertainty and confidence of predictions, and the process involves several key steps:

\begin{enumerate} 

\item \textbf{Feature Extraction and Fusion}: KPHD-Net extracts features from each modality using a set of encoders, which are then fused into a unified latent space. This is achieved through a carefully designed encoder network capable of handling diverse modalities, such as RGB images and depth information. Indeed, the PHD is not limited to multi-view scenarios, and we have considered its performance in single-view settings as well. In our experimental results, both the RGB and Depth views represent single-view scenarios.

\item \textbf{Evidence Collection and Dirichlet Parameterization}: KPHD-Net generates Dirichlet parameters for each modality, serving as evidence for each cluster. These parameters are then combined across modalities to produce a fused Dirichlet distribution, encapsulating the collective evidence from all data sources.

\item \textbf{Uncertainty-Aware for Classification and Clustering}: The parameters of the Dirichlet distribution provide a nuanced representation of data points, capturing both their cluster assignments and associated uncertainties. This approach is particularly beneficial in scenarios involving ambiguous data, as it prevents premature commitment to uncertain cluster assignments.

\item \textbf{Kalman Filtering for Dynamic Adjustment}: A Kalman filter is employed to dynamically adjust the fused evidence, ensuring that the KPHD-Net's clustering decisions are continuously refined based on incoming data. This integration helps maintain robustness and adaptability in the face of varying data quality or shifting distributions.
\end{enumerate}

This method not only achieves high clustering accuracy but also provides a measure of certainty for each clustering decision, thereby enhancing the reliability of the clustering outcomes. The KPHD-Net's capability to perform both classification and clustering tasks showcases its versatility and robustness in handling multi-view data. By employing Dirichlet distributions for uncertainty modeling, along with evidence theory and Kalman filtering, KPHD-Net offers a sophisticated approach to multi-view learning, delivering both high accuracy and reliable uncertainty estimates.

\subsection{Loss Function}

The cross-entropy loss function is commonly used in classification tasks and is defined as:
\begin{equation}
    \mathcal{L}(y, \hat{y}) = -\sum_{i=1}^{N} y_i \log(\hat{y}_i),
\end{equation}
where $y_i$ is the true label, typically represented as a one-hot encoded vector, $\hat{y}_i$ is the predicted probability for class $i$ given by the model, and $N$ is the total number of classes. 

The ETMC algorithm enhances classification network generalization and robustness by optimizing a loss function that accounts for both classification outcomes and associated confidence and uncertainty. After that, we can get the loss function as follows:
\begin{equation}
    \begin{aligned}\mathcal{L}^{overall}&=\sum_{i=1}^{N}\mathcal{L}^{fused}\left(\left\{x_{n}^{m}\right\}_{m=1}^{M},y_{n}\right)+\sum_{i=1}^{N}\sum_{m=1}^{M}\mathcal{L}^{m}\left(x_{n}^{m},y_{n}\right)\\&+\sum_{i=1}^{N}\mathcal{L}^{pseudo}\left(\left\{x_{n}^{m}\right\}_{m=1}^{M},y_{n}\right).\end{aligned}
\end{equation}
For the first part $\mathcal{L}^{\text{fused}}\left(\left\{x_{n}^{m}\right\}_{m=1}^{M}, y_{n}\right)$ is:
\begin{equation}
    \begin{aligned}
    &\quad \mathbb{E}_{\boldsymbol{\mu} \sim \text{Dir}(\boldsymbol{\mu}|\boldsymbol{\alpha})}[\log p(y|\boldsymbol{\mu})]\\
    &\quad - \lambda_{t} D_{\text{PHD}}\left[\text{Dir}(\boldsymbol{\mu}|\widetilde{\boldsymbol{\alpha}}) \| \text{Dir}(\boldsymbol{\mu}|[1,\cdots,1])\right].
\end{aligned}
\end{equation}
For the second part $L^{\text{pseudo}}\left(\left\{x_{n}^{m}\right\}_{m=1}^{M}, y_{n}\right)$:
\begin{equation}
    \begin{aligned}
    &\mathbb{E}_{\boldsymbol{\mu}^{\boldsymbol{p}} \sim \text{Dir}(\boldsymbol{\mu}^{p}|\boldsymbol{\alpha}^{p})}\left[\log p(y \mid \boldsymbol{\mu}^{p})\right] \\
    &\quad - \lambda_{t} D_{\text{PHD}}\left[\text{Dir}(\boldsymbol{\mu}^{p} \mid \widetilde{\boldsymbol{\alpha}}^{p}) \,\|\, \text{Dir}(\boldsymbol{\mu}^{p} \mid [1,\cdots,1])\right].
\end{aligned}
\end{equation}
For the third part:
\begin{align}
    \mathcal{L}^{m}\left(x^{m}, y\right)
    &= \mathbb{E}_{q_{\theta}(\boldsymbol{\mu}^{m}|x^{m})}[\log p(y|\boldsymbol{\mu}^{m})] \nonumber \\
    &- \lambda_{t} \text{PHD}\left[D(\boldsymbol{\mu}^{m}\mid \boldsymbol{\alpha}^{m}) \| D(\boldsymbol{\mu}^{m}\mid [1,\cdots,1])\right].
\end{align}

The primary component in the objective function corresponds to the variational objective function for $M$ integrated modalities. Essentially, this variational objective function involves integrating the traditional cross-entropy loss over a simplex defined by the Dirichlet function. The secondary component serves as a prior constraint to ensure the creation of a more plausible Dirichlet distribution. In essence, the primary variational objective function assesses the KPHD-Net's performance by comparing its predictions to the true labels while imposing constraints on the generation of a more sensible Dirichlet distribution.

The second component within the objective function represents the variational objective function for $M$ integrated pseudo-modalities. The third component within the objective function is focused on deriving the Dirichlet distribution for each individual modality. For a specific modality denoted as “m”, its loss function can be formulated as previously described. The algorithm's workflow is visually outlined in Algorithm \ref{alg:spl}, and the overview of the uncertainty quantification for incomplete multi-view data using divergence measures is shown in Fig. \ref{fig:01}.

\begin{figure}[t]
    \centering
    \includegraphics[scale=0.15]{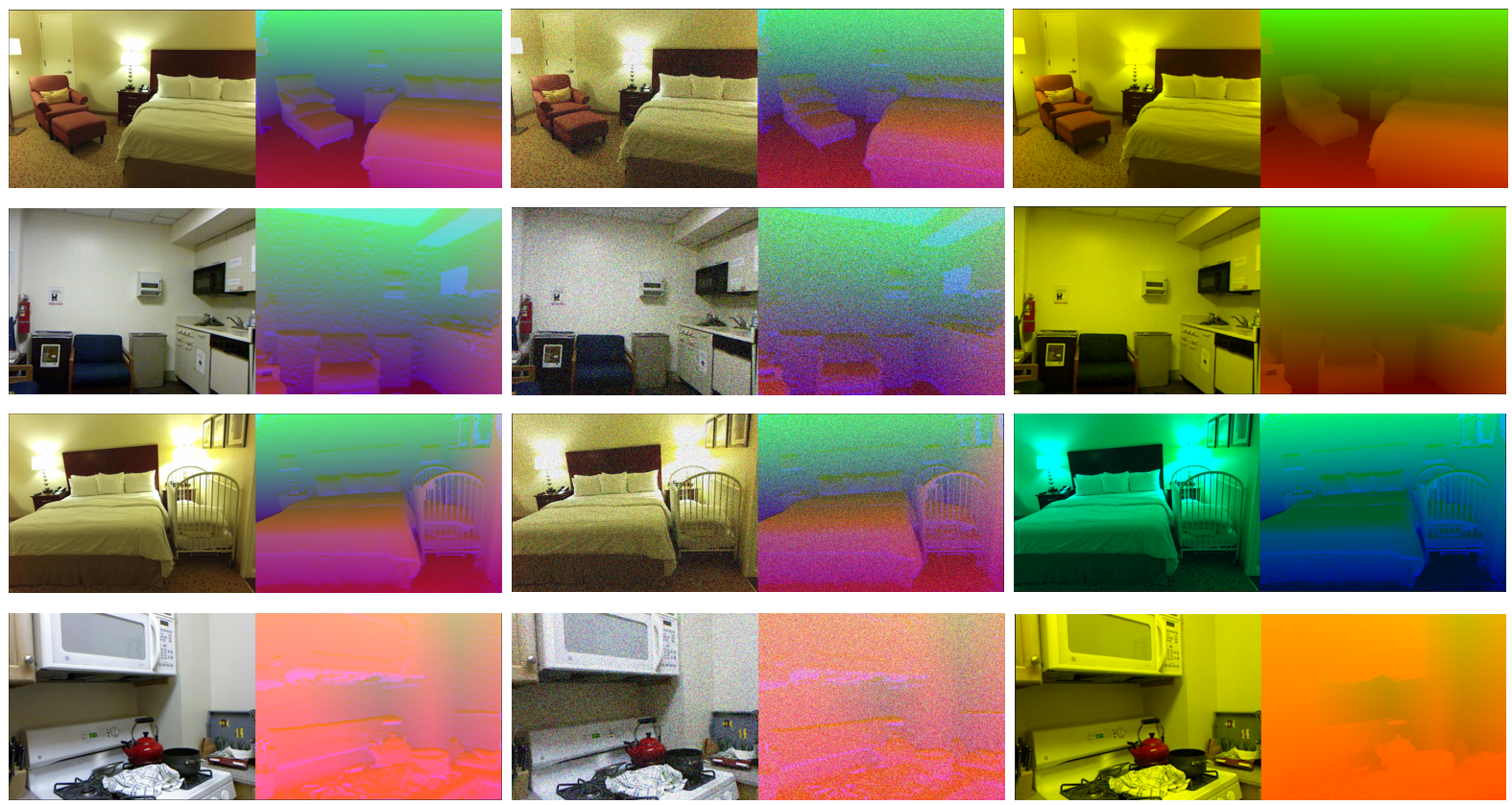}  
    \caption{Comparative Visualization of SUNRGBD and NYUD Datasets: Original images, images corrupted with Gaussian noise ($\sigma^2=0.03$), and images with channel removal at a specified missing rate are displayed. This figure demonstrates the robustness of our method, which effectively classifies images even with channel removal at the designated missing rate.}  
    \label{fig:02}
\end{figure}

\section{Experiment}
In this section, experiments are conducted across various scenarios to provide a comprehensive evaluation of the proposed algorithm. Specifically, we utilize our algorithm to tackle a range of multi-view classification tasks, including RGB-D scene recognition, by employing four real-world multi-view datasets. Additionally, we extend our evaluation beyond classification by incorporating three additional datasets to test the clustering capabilities of KPHD-Net. This comprehensive assessment demonstrates the versatility and effectiveness of our approach in both classification and clustering tasks.
\subsection{Datasets}
\subsubsection{Classification Datasets}
To evaluate the performance of KPHD-Net on multi-view classification tasks, the following datasets are employed: textbf{1. SUNRGBD \cite{p40}}: In the SUN RGB-D dataset, 19 primary scene categories are selected, with each category containing at least 80 images based on prior research. A total of 4,845 samples are used exclusively for training, and an additional 4,659 samples are used for both training and testing. Separate training and testing are then conducted with 20,210 and 3,000 samples, respectively. \textbf{2. NYUDV2 \cite{p41}}: The NYUD Depth V2 (NYUD2) dataset is a compact RGB-D dataset consisting of 1,449 RGB-depth image pairs across 27 categories. These categories are reorganized into 10 classes, including 9 common scenes and one “other” category. A total of 795 samples are used for training and 654 for testing. \textbf{3. ADE20K \cite{p42}}: ADE20K, developed by Stanford University, is designed for semantic segmentation in computer vision. It contains over 20,000 high-resolution images spanning more than 150 object and event categories. The images are reorganized into 10 groups, with 795 samples used for training and 654 for testing. \textbf{4. ScanNet \cite{p43}}: ScanNet consists of 1,513 indoor scenes, each captured with point cloud data from multiple angles. The dataset includes 21 object categories, labeled 0 to 20, with category 0 for unidentified objects. Of the scenes, 1,201 are used for training and 312 for testing.

\subsubsection{Clustering Datasets}

In addition to classification tasks, KPHD-Net's capability in clustering tasks is tested using five multi-view datasets: \textbf{1. MSRC-V1} \cite{p44}: This image dataset contains eight object classes, each with 30 images. Following \cite{p44}, we select seven classes: trees, buildings, airplanes, cows, faces, cars, and bicycles. \textbf{2. ORL} \cite{p73}: This database consists of 10 face images for each of 40 subjects, captured at different times under varying lighting conditions and facial expressions.
\textbf{3. MNIST} \cite{p74}: In this study, we use 4,000 images from the MNIST dataset, a widely recognized benchmark for machine learning and image processing. It contains grayscale images of handwritten digits from 10 categories (digits 0 to 9), with each image being $28\times28$ pixels. The dataset provides a total of 60,000 training images and 10,000 test images. We randomly select 4,000 images to ensure diversity and representativeness for model training and validation. \textbf{4. Caltech101-7} \cite{p45}: The Caltech101-7 dataset, a subset of Caltech101, includes images from seven selected classes, as identified in previous studies \cite{p45}. It is commonly used for training and evaluating object recognition algorithms. \textbf{5. Caltech101-20} \cite{p45}: Caltech101-20 is another subset of Caltech101, consisting of images from 20 selected classes, based on prior research \cite{p45}. It provides a broader range of objects for testing and refining recognition models.

\subsection{Performance Evaluation}
For classification tasks, the performance of KPHD-Net is evaluated using Accuracy \cite{p84}, where TP (true positives), TN (true negatives), FP (false positives), and FN (false negatives) are considered. Accuracy, calculated as: $\text{Accuracy}=\frac{TP+TN}{TP+TN+FP+FN},$ measures overall classification correctness. 

The F1 score is a metric that combines precision and recall to provide a single measure of KPHD-Net's performance in classification tasks. The F1 score \cite{p84} is defined as: $\text{F1} = 2 \times \frac{\text{Precision} \times \text{Recall}}{\text{Precision} + \text{Recall}},$ where precision is the ratio of true positive predictions to the total number of positive predictions made by KPHD-Net, and recall is the ratio of true positive predictions to the total number of actual positives. The F1 score ranges from 0 to 1, with higher values indicating better model performance. For clustering tasks, given a sample \(x_{i}\) for any \(i\in \{1,\ldots,n\}\), the predicted clustering label and the real label are denoted as \(p_{i}\) and \(q_{i}\), respectively. The clustering accuracy (CA) \cite{p60} is defined as: $\mathrm{CA}=\frac{\sum_{i=1}^n\delta(q_i,\mathrm{map}(p_i))}{n},$ where \(\delta(a,b)=1\) if \(a=b\),and \(\delta(a,b)=0\) otherwise. And \(\mathrm{map}(\cdot)\) is the best permutation mapping that matches the predicted clustering labels to the ground truths.

\subsection{Data Preprocessing}
The samples from the specified datasets are merged and preprocessed. In multi-view datasets, images captured from specific angles typically include both RGB and depth images. Prior to training, the image data from these angles are concatenated to optimize the classification process. Additionally, Gaussian noise with variances of 0.01, 0.02, and 0.03, along with missing rates of 0.1, 0.3, and 0.5, is applied to further process the datasets.

\subsection{Model Architecture}
The ResNet50 model \cite{p46}, pretrained on ImageNet \cite{p47}, is used as the base architecture, along with other backbones such as UNet \cite{p78}, DenseNet \cite{p79}, ViT \cite{p80}, and Mamba \cite{p81}, to comprehensively evaluate performance on the clustering task. ResNet50 is a deep residual neural network with 50 layers. Training is conducted on a machine equipped with two Intel(R) Xeon(R) Platinum 8358P @ 2.60GHz CPUs and two 4090 GPUs. Input images are standardized to a size of $256 \times 256$, then randomly cropped to $224 \times 224$. The neural networks are trained using the Adam optimizer \cite{p48}, with weight decay and learning rate decay. For KPHD-Net, pseudo-views are generated by concatenating the outputs of two backbone networks. All experiments are performed using the PyTorch framework \cite{p49}.

\subsection{State-of-the-Art Methods for Performance Comparison in Multi-View Classification and Clustering Tasks} 

 \subsubsection{State-of-the-Art Methods for Performance Comparison in Multi-View Classification Tasks} 
 \textbf{TrecgNet (CVPR, 2019) \cite{p50}} introduced translate-to-recognize networks, a novel approach for RGB-D scene recognition that leverages translation mechanisms to effectively integrate RGB and depth information, thereby enhancing recognition accuracy.  \textbf{G-L-SOOR (IEEE TIP, 2019)\cite{p51}} explored the use of spatial object-to-object relations in image representations to improve RGB-D scene recognition, significantly enhancing scene understanding in complex environments.
 \textbf{CBCL-RGBD (BMVC, 2020)\cite{p52}} presented a centroid-based concept learning approach for RGB-D indoor scene classification, aiming to improve accuracy and robustness by leveraging depth information and centroids to represent key concepts. \textbf{ CMPT (IJCV, 2021)\cite{p53}} introduced a cross-modal pyramid translation approach for RGB-D scene recognition, which enhances the integration of RGB and depth information to improve scene recognition accuracy.  \textbf{CNN-randRNN (CVIU, 2022)\cite{p54}} explored the integration of convolutional neural networks (CNNs) with random recurrent neural networks (RNNs) to develop a multi-level analysis framework for RGB-D object and scene recognition, aiming to boost accuracy and performance.  \textbf{ETMC (TPAMI, 2022)\cite{p18}} introduced a trusted multi-view classification method that employs dynamic evidential fusion to enhance decision-making reliability by effectively integrating information from multiple views.  \textbf{FGMNet {(DSP, 2024)\cite{p85}} proposed a feature grouping network with attention and contextual modules to effectively fuse RGB and depth features for improved indoor scene segmentation.}  \textbf{FCDENet (IOTJ, 2025)\cite{p86}} proposed a feature contrast difference and enhanced network for RGB-D indoor scene classification, utilizing contrast modules, information clustering, and wavelet transforms to improve accuracy and robustness across diverse IoT scenarios.

\subsubsection{State-of-the-Art Methods for Performance Comparison in Multi-View Clustering Tasks} 
 \begin{table}[t]
\centering
\caption{Classification Accuracy on the ADE20K Dataset at Various Missing Rates ($\eta = 0.1, 0.3, 0.5$): This table presents the classification performance on the ADE20K dataset under different levels of missing data, denoted by $\eta$. The results demonstrate the model's robustness and adaptability as the missing rate increases, providing insight into its capability to handle incomplete datasets}
\setlength{\tabcolsep}{2pt} 
\renewcommand\arraystretch{1.0} 
\scriptsize
\begin{tabular}{ccccccc}
\toprule
\multicolumn{7}{c}{$\alpha=2.0 \quad \gamma=0.5$} \\ \midrule
$\eta$ & Depth ACC & RGB ACC& Fusion ACC& Depth F1 & RGB F1& Fusion F1 \\ \midrule
0.1 & 94.10 & 93.75 & 96.18 & 82.47 & 80.46 & 86.69 \\ 
0.3 & 93.71 & 92.36 & 95.76 & 83.64 & 82.44 & 88.43 \\ 
0.5 & 93.54 & 92.29 & 95.69 & 85.62 & 82.84 & 89.76 \\ 
\parbox[t]{1.5cm}{ETMC \cite{p18}} & 85.60 & 85.54 & 89.78 & - & - & - \\ \midrule
\multicolumn{7}{c}{$\alpha=1.1 \quad \gamma=1.8$} \\ \midrule
$\eta$ & Depth ACC & RGB ACC& Fusion ACC& Depth F1 & RGB F1& Fusion F1 \\ \midrule
0.1 & 93.89 & 92.64 & 95.83 & 79.28 & 83.41 & 87.77 \\ 
0.3 & 93.85 & 92.78 & 95.76 & 80.01 & 83.40 & 88.53 \\ 
0.5 & 93.54 & 91.56 & 95.73 & 78.75 & 82.55 & 88.42 \\ 
\parbox[t]{1.5cm}{ETMC \cite{p18}} & 85.60 & 85.54 & 89.78 & - & - & - \\ \midrule
\multicolumn{7}{c}{$\alpha=1.5 \quad \gamma=0.8$} \\ \midrule
$\eta$ & Depth ACC & RGB ACC& Fusion ACC& Depth F1 & RGB F1& Fusion F1 \\ \midrule
0.1 & 93.99 & 92.29 & 96.14 & 82.29 & 83.51 & 89.15 \\ 
0.3 & 93.33 & 92.98 & 95.59 & 82.37 & 83.65 & 87.64 \\ 
0.5 & 93.30 & 91.46 & 95.76 & 83.59 & 81.82 & 89.17 \\ 
\parbox[t]{1.5cm}{ETMC \cite{p18}} & 85.60 & 85.54 & 89.78 & - & - & - \\ \midrule
\multicolumn{7}{c}{$\alpha=1.6 \quad \gamma=0.5$} \\ \midrule
$\eta$ & Depth ACC & RGB ACC& Fusion ACC& Depth F1 & RGB F1& Fusion F1 \\ \midrule
0.1 & 93.82 & 92.36 & 95.69 & 82.48 & 81.78 & 88.08 \\ 
0.3 & 93.57 & 92.57 & 95.87 & 83.61 & 82.28 & 89.05 \\ 
0.5 & 93.85 & 92.50 & 95.69 & 84.69 & 83.36 & 88.79 \\ 
\parbox[t]{1.5cm}{ETMC \cite{p18}} & 85.60 & 85.54 & 89.78 & - & - & - \\ 
\bottomrule
\end{tabular}
\label{tab02}
\end{table}
Several state-of-the-art methods for multi-view clustering are commonly used for performance comparison:
 \textbf{k-Means (TPAMI, 2002) \cite{p61}} introduced an efficient k-means clustering algorithm, emphasizing its analysis and implementation to enhance performance in data clustering tasks.  \textbf{SWMC (IJCAI, 2017)\cite{p55}} proposed a self-weighted multi-view clustering approach that utilizes multiple graphs to improve clustering performance across different data views. \textbf{ MLAN (AAAI, 2017)\cite{p56}} presented methods for multi-view clustering and semi-supervised classification, using adaptive neighbors to enhance performance.  \textbf{MCGC (IEEE TIP, 2018)\cite{p58}} introduced Multiview Consensus Graph Clustering, a novel approach that leverages consensus across multiple views to enhance clustering accuracy and robustness in image processing tasks.  \textbf{BMVC (TPAMI, 2018)\cite{p59}} explored binary multi-view clustering, focusing on techniques for effectively clustering data from multiple perspectives using binary representations.  \textbf{MSC-IAS (PR, 2019)\cite{p57}} developed a multi-view subspace clustering method that incorporates intactness-aware similarity to improve clustering outcomes.  \textbf{DSRL (TPAMI, 2022)\cite{p60}} proposed a method for learning deep sparse regularizers, aimed at enhancing multi-view clustering and semi-supervised classification tasks.  \textbf{MCBVA(IJMLC, 2023) \cite{p76}} proposed a clustering method named Multi-view Clustering Based on View-Attention Driven. Based on autoencoder, the method learns feature representations from different views data using contrast learning and attention mechanisms.  \textbf{DSCMC (ArXiv, 2024)\cite{p75}} proposed a novel multi-view clustering model. The main objective of approach is to enhance the clustering performance by leveraging cotraining in two distinct spaces.  \textbf{DIMvLN (AAAI, 2024)\cite{p77}} proposed a novel Deep Incomplete Multi-view Learning Network (DIMvLN). Firstly designs the deep graph networks to effectively recover missing data with assigning pseudo-labels of large amounts of unlabeled instances. 

\begin{table}[t]
	\setlength{\belowdisplayskip}{0pt}
	\setlength{\abovedisplayskip}{0pt}
	\setlength{\abovecaptionskip}{0pt}
	\centering
	\scriptsize
	\caption{Quantitative Evaluation of Intra-Class Accuracy Results on NYUD Depth V2, ADE20K, ScanNet, and SUN RGB-D Datasets. This table presents a comprehensive quantitative evaluation of intra-class accuracy for several widely-used datasets, including NYUD Depth V2, ADE20K, ScanNet, and SUN RGB-D. The results highlight the performance of the model in accurately distinguishing between various classes within each dataset, providing a comparative analysis across different domains of depth, scene understanding}
	\setlength{\tabcolsep}{5pt}
	\renewcommand\arraystretch{0.8}
	\begin{tabular}{p{1.0cm}p{3cm}p{1.1cm}p{1.0cm}p{1.1cm}}  
		\toprule [1.0pt]
		{Datasets}&	Models &Depth (\%) &RGB (\%) &Fusion (\%)\\
		\midrule[0.5pt]
        \multirow{7}{*}{NYUD2} 
        & Ours ($\alpha = 1.1$,$\gamma = 2.0$) & 63.86 & 66.71 & 72.14 \\
        & Ours ($\alpha = 1.5$,$\gamma = 5.0$) & 64.16 & \textbf{67.32} & 72.60 \\
        & Ours ($\alpha = 1.6$,$\gamma = 5.0$) & 64.31 & 66.72 & 72.45 \\
        & Ours ($\alpha = 2.0$,$\gamma = 8.0$) & 64.41 & 65.67 & 73.34 \\
        & Ours ($\alpha = 2.0$,$\gamma = 5.0$) & \textbf{66.27} & 66.27 & \textbf{73.64} \\
        & ETMC \cite{p18} & 65.51 & 64.91 & 72.43 \\
        \midrule[0.5pt]
        \multirow{6}{*}{ADE20K} 
        & Ours ($\alpha = 1.1$,$\gamma = 1.8$) & 93.75 & 93.33 & 96.11 \\
        & Ours ($\alpha = 1.5$,$\gamma = 0.8$) & 94.44 & 92.46 & 96.14 \\
        & Ours ($\alpha = 1.6$,$\gamma = 1.0$) & 94.31 & 92.71 & 96.18 \\
        & Ours ($\alpha = 2.0$,$\gamma = 1.5$) & 94.23 & 92.01 & 96.04 \\
        & Ours ($\alpha = 2.0$,$\gamma = 0.5$) & \textbf{96.62} & \textbf{93.78} & \textbf{96.21} \\
        & ETMC \cite{p18} & 85.60 & 85.54 & 89.78 \\
        \midrule[0.5pt]
        \multirow{6}{*}{ScanNet} 
        & Ours ($\alpha = 1.1$,$\gamma = 0.5$) & 72.54 & 89.68 & 89.71 \\
        & Ours ($\alpha = 1.5$,$\gamma = 0.5$) & 76.11 & 90.83 & 90.86 \\
        & Ours ($\alpha = 1.6$,$\gamma = 0.5$) & 76.76 & 91.75 & \textbf{92.14} \\
        & Ours ($\alpha = 2.0$,$\gamma = 8.0$) & 77.55 & 91.60 & 92.08 \\
        & Ours ($\alpha = 2.0$,$\gamma = 0.5$) & \textbf{78.49} & \textbf{92.14} & 92.09 \\
        & ETMC \cite{p18} & 75.89 & 90.71 & 91.63 \\
        \midrule[0.5pt]
        \multirow{6}{*}{SUN RGB-D} 
        & Ours ($\alpha = 1.1$,$\gamma = 0.5$) & 51.23 & 54.97 & 59.19 \\
        & Ours ($\alpha = 1.5$,$\gamma = 0.5$) & 53.53 & 55.53 & 60.18 \\
        & Ours ($\alpha = 1.6$,$\gamma = 0.5$) & 52.71 & 56.03 & 60.40 \\
        & Ours ($\alpha = 1.8$,$\gamma = 0.8$) & 53.51 & \textbf{56.64} & 60.95 \\
        & Ours ($\alpha = 2.0$,$\gamma = 1.0$) & \textbf{53.94} & 56.43 & \textbf{61.60} \\
        & ETMC \cite{p18} & 52.48 & \textbf{56.64} & 60.80 \\
        \bottomrule[1.0pt]
	\end{tabular}
	\label{tab03}
\end{table}
\begin{table}
\centering
\scriptsize
\caption{Quantitative Evaluation of Inter-Class Accuracy for Experimental Results on NYUD Depth V2 and SUN RGB-D Datasets. This table presents a comprehensive quantitative assessment of inter-class accuracy achieved from experimental results on the NYUD Depth V2 and SUN RGB-D datasets. It highlights the performance distinctions between different classes, providing critical insights into the model's classification precision across diverse categories}
\setlength{\tabcolsep}{5pt}
\renewcommand\arraystretch{1.0}
\begin{tabular}{p{1.2cm}p{2.2cm}p{1.1cm}p{1.1cm}p{1.1cm}}
\toprule[1.0pt]
Datasets & Models & RGB (\%) & Depth (\%) & Fusion (\%) \\
\midrule[0.5pt]
\multirow{8}{*}{NYUD2} 
& TrecgNet \cite{p50} & 64.80 & 57.70 & 69.20 \\
& G-L-SOOR \cite{p51} & 64.20 & 62.30 & 67.40 \\
& CBCL-RGBD \cite{p52} & 66.40 & 49.50 & 70.90 \\
& CMPT \cite{p53} & 66.10 & 64.10 & 71.80 \\
& CNN-randRNN \cite{p54} & \textbf{69.10} & 64.00 & 73.00 \\
& ETMC \cite{p18} & 64.91 & 65.51 & 72.43 \\
& FGMNet \cite{p85} & - & - & 69.0 \\
& FCDENet \cite{p86} & - & - & 72.0 \\
& Ours & 66.27 & \textbf{66.27} & \textbf{73.64} \\
\midrule[0.5pt]
\multirow{7}{*}{SUN RGB-D} 
& TrecgNet \cite{p50} & 50.60 & 47.90 & 56.70 \\
& G-L-SOOR \cite{p51} & 50.40 & 44.10 & 55.50 \\
& CBCL-RGBD \cite{p52} & 48.80 & 37.30 & 50.90 \\
& CMPT \cite{p53} & 54.20 & 49.40 & 61.10 \\
& CNN-randRNN \cite{p54} & \textbf{58.50} & 50.40 & \textbf{62.50} \\
& ETMC \cite{p18} & 56.64 & 52.48 & 60.80 \\
& Ours & 53.94 & \textbf{56.43} & 61.60 \\
\bottomrule[1.0pt]
\end{tabular}
\label{tab04}
\end{table}

\subsection{Results of the Classification}
\subsubsection{Experimental Analysis}
Table \ref{tab02} shows the KPHD-Net's performance under varying missing rates ($\eta$), evaluate by accuracy and F1 score. For $\alpha=2.0$ and $\gamma=0.5$, KPHD-Net maintains high accuracy and F1 scores, with fusion accuracy of 96.18\% and F1 score of 86.69 at a missing rate of 0.1. As the missing rate increases to 0.5, accuracy slightly drops to 95.69\%, but the F1 score improves to 89.76\%. With $\alpha=1.1$ and $\gamma=1.8$, KPHD-Net achieves a fusion accuracy of 95.83\% and an F1 score of 87.77\% at a missing rate of 0.1, consistently outperforming the ETMC baseline. For $\alpha=1.5$ and $\gamma=0.8$, peak fusion accuracy reaches 96.14\% with an F1 score of 89.15\% at a missing rate of 0.1. Performance remains stable across different missing rates. The KPHD-Net demonstrates strong robustness to missing data, effectively managing uncertainty and surpassing the ETMC.

Intra-class experiments are conducted to test KPHD-Net with various hyperparameters on real-world scene datasets. Four distinct datasets, namely NYUD Depth V2, ADE20K, ScanNet, and SUN RGB-D, are used to evaluate classification performance in complex scenarios. The experimental results are provided in Table \ref{tab03}.

In these experiments, KPHD-Net is assessed across four multi-class datasets, and its performance is compared to that of ETMC. The Hölder index in KPHD-Net is adaptively adjusted for different datasets to optimize performance. During testing, the classification accuracy of individual modalities (Depth and RGB) and fused modalities is evaluated separately.

For the NYUD Depth V2 dataset, KPHD-Net is shown to achieve superior performance, with a fusion modality accuracy of 73.64\%, reflecting a 1.21\% improvement over ETMC. Notably, the RGB modality accuracy improves to 67.32\% with $\alpha = 1.5$ and $\gamma = 5.0$, surpassing ETMC.

In the ADE20K dataset, KPHD-Net achieves a fusion modality accuracy of 96.21\%, demonstrating a significant improvement of 6.43\% compared to ETMC. The highest individual modality accuracy, 96.62\%, is observed for the depth modality with $\alpha = 2.0$ and $\gamma = 0.5$.

For the ScanNet dataset, KPHD-Net reaches a fusion modality accuracy of 92.14\%, reflecting a 0.51\% improvement over ETMC. The highest individual modality accuracy, 92.14\%, is recorded for the RGB modality with $\alpha = 2.0$ and $\gamma = 0.5$.

In the SUN RGB-D dataset, KPHD-Net achieves a fusion modality accuracy of 61.60\%, 0.80\% higher than ETMC. The depth modality accuracy is 53.94\% with $\alpha = 2.0$ and $\gamma = 1.0$.

The experimental results indicate that KPHD-Net consistently outperforms ETMC across different datasets and modalities. The introduction of PHD and Kalman filtering allows KPHD-Net to better adapt to the diverse characteristics of multi-class data, providing more accurate and reliable uncertainty estimates. This leads to enhanced performance through more effective integration of modality information.

Inter-class experiments compare KPHD-Net with pre-existing algorithms on the NYUD Depth V2 and SUN RGB-D datasets. KPHD-Net shows superior performance, achieving the highest fusion accuracy of 73.64\% and 61.60\%, respectively. The results highlight the positive impact of uncertainty analysis on multi-view classification accuracy, especially in fused modalities. Enhanced uncertainty analysis and the refined objective function contribute to KPHD-Net’s improved performance.

The results in Table \ref{tab04} show the inter-class accuracy for various models on the NYUD Depth V2 and SUN RGB-D datasets. On the NYUD Depth V2 dataset, our model outperforms many existing models in the depth modality, achieving the highest score of 66.27\%, surpassing CNN-randRNN (64.00\%). However, it slightly falls behind CNN-randRNN in the RGB modality (66.27\% vs. 69.10\%) and in the fusion modality (73.64\% vs. 73.00\%). This may be attributed to the fact that while our model performs well in depth feature extraction, it does not capture RGB information as effectively as CNN-randRNN, which is specifically designed to handle RGB data more robustly. The fusion performance gap suggests that our method may benefit from further refinement in combining these modalities. On the SUN RGB-D dataset, our model achieves 53.94\% in the RGB modality, which is lower than CNN-randRNN (58.50\%), but it surpasses it in the depth modality with 56.43\% compared to 50.40\%. In fusion, our model scores 61.60\%, slightly behind CNN-randRNN (62.50\%). This performance discrepancy in the fusion condition may result from a less optimal interaction between RGB and depth features in our model, which could be improved by exploring more advanced fusion strategies. Overall, while our model demonstrates strong performance, particularly with depth data, further refinement in RGB feature extraction and fusion strategies could yield even better results. This analysis highlights the need for a deeper exploration of how our model processes RGB features and combines them with depth data to enhance its performance, especially in comparison to CNN-randRNN.

Table \ref{tab06} presents KPHD-Net's performance on the ADE20K dataset under various noise conditions. With $\sigma^2=0.01$, KPHD-Net achieves peak fusion accuracy of 94.65\% for $\alpha=2.0$ and $\gamma=0.5$, closely approaching the no-noise accuracy of 96.21\%. The RGB modality accuracy remains stable at 80.45\%. For $\sigma^2=0.03$, accuracy decreases but still exceeds the ETMC baseline of 89.78\%, with fusion accuracy at 92.53\% and RGB accuracy at 75.17\%. Overall, KPHD-Net demonstrates effective noise handling, maintaining high accuracy through robust uncertainty analysis.
\begin{table}[t]
    \centering
    \scriptsize 
    \caption{Clustering Performance on the Caltech101-20 Dataset Across Different Hyperparameter Configurations. This table presents the clustering results obtained on the Caltech101-20 dataset, evaluated under various hyperparameter settings, highlighting their impact on model performance}
    
    \setlength{\tabcolsep}{4pt} 
    \renewcommand\arraystretch{1.1} 
    \begin{tabular}{cccc|cccc}
        \toprule[1.0pt]
        \multicolumn{4}{c}{$\alpha=1.7$} & \multicolumn{4}{c}{$\alpha=1.1$} \\
        \midrule[0.5pt]
        $\gamma$ & Depth & RGB & Fusion & $\gamma$ & Depth & RGB & Fusion \\ 
        \midrule[0.5pt]
        0.5 & 58.86 & 64.52 & 90.13 & 0.5 & 65.62 & 67.26 & 89.47 \\ 
        1.3 & \textbf{67.79} & 68.11 & 90.85 & 1.3 & \textbf{72.06} & \textbf{68.08} & 89.85 \\ 
        2 & 67.17 & 65.94 & \textbf{92.48} & 2 & 69.29 & 66.76 & \textbf{90.13} \\ 
        5 & 66.74 & \textbf{70.71} & 89.70 & 5 & 61.80 & 63.07 & 86.72 \\ 
        10 & 63.52 & 66.42 & 91.14 & 10 & 54.44 & 56.57 & 79.20 \\ 
        \parbox[t]{1.3cm}{DSRL \cite{p60}} & - & - & 72.9 & \parbox[t]{1.3cm}{DSRL \cite{p60}} & - & - & 72.9 \\ 
        \bottomrule[1.0pt]
    
        \multicolumn{4}{c}{$\alpha=1.3$} & \multicolumn{4}{c}{$\alpha=2.0$} \\
        \midrule[0.5pt]
        $\gamma$ & Depth & RGB & Fusion & $\gamma$ & Depth & RGB & Fusion \\ 
        \midrule[0.5pt]
        0.5 & 65.77 & 67.34 & 91.84 & 0.5 & 67.94 & 67.43 & 92.18 \\ 
        1.3 & \textbf{72.42} & 69.47 & \textbf{92.44} & 1.3 & 64.30 & \textbf{69.06} & 92.44 \\ 
        2 & 72.23 & 66.38 & 90.72 & 2 & 70.82 & 68.15 & \textbf{92.59} \\ 
        5 & 70.07 & \textbf{69.65} & 90.36 & 5 & 65.85 & 68.90 & 91.22 \\ 
        10 & 62.97 & 53.61 & 87.68 & 10 & \textbf{73.56} & 63.22 & 91.14 \\ 
        \parbox[t]{1.3cm}{DSRL \cite{p60}} & - & - & 72.9 & \parbox[t]{1.3cm}{DSRL \cite{p60}} & - & - & 72.9 \\
        \bottomrule[1.0pt]
    \end{tabular}
    \label{tab05}
\end{table}

\begin{figure*}
    \setlength{\belowcaptionskip}{0pt} 
    \setlength{\abovecaptionskip}{0pt} 
    \centering
    \includegraphics[width=0.9\linewidth]{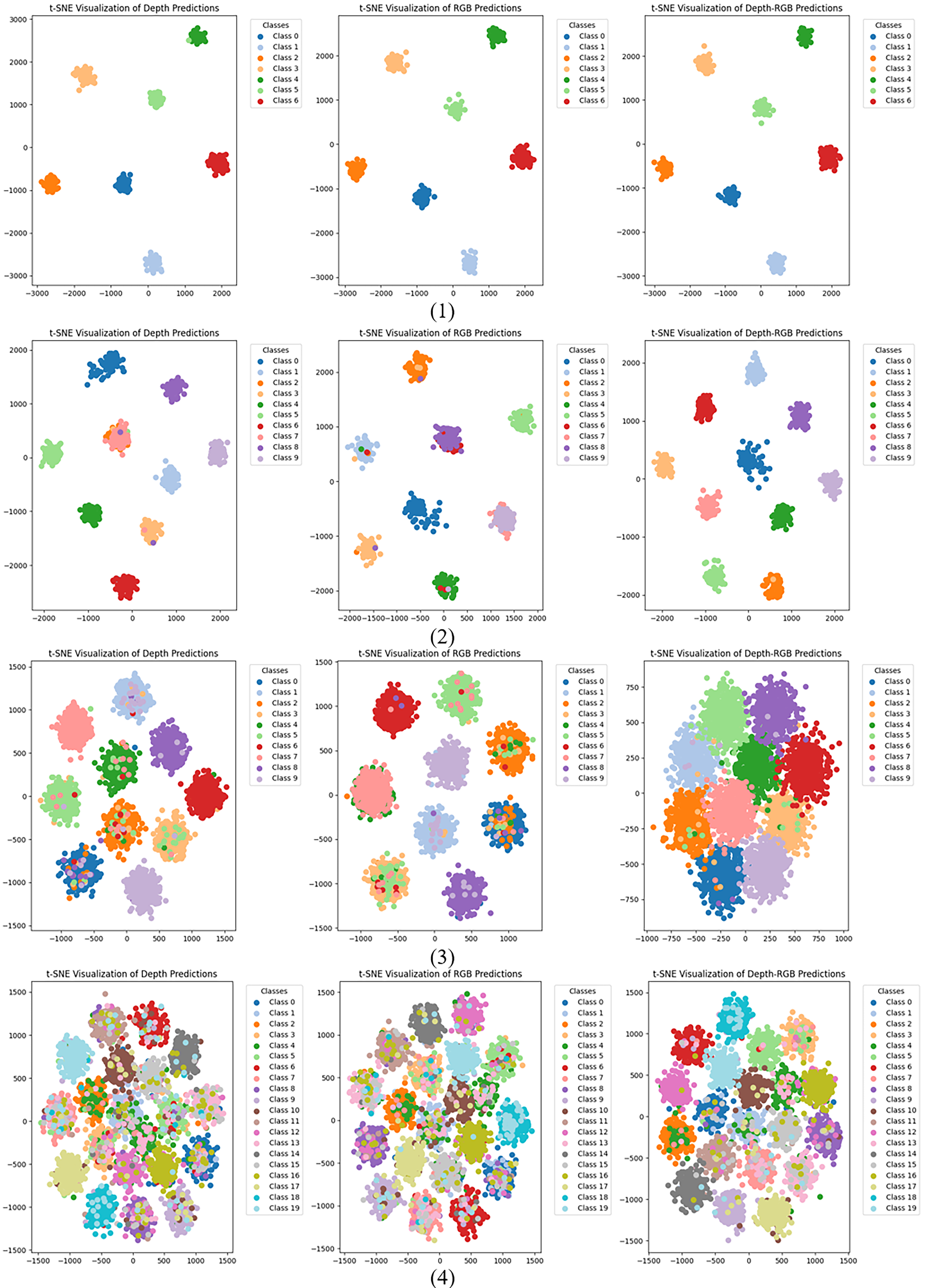}
    \caption{t-SNE visualizations of multi-view clustering results across diverse datasets: (1) MSRC-v1, (2) MNIST, (3) Caltech101-7, and (4) Caltech101-20. This table highlights the clustering performance of our model across these datasets, offering a comparative analysis of the outcomes.}
    \label{fig:10000}
\end{figure*}
\begin{table*}[htbp]
    \centering
    \scriptsize 
    \caption{Summary of Experimental Results on the ADE20K Dataset with Varying Levels of Gaussian Noise. This table presents a detailed comparison of performance metrics when applying different levels of Gaussian noise to the ADE20K dataset. The results are categorized by varying noise parameters, where the depth, RGB, and fusion model outputs are measured under different values of $\alpha$ and $\gamma$. The performance is reported both with and without noise, along with a reference comparison to the ETMC method}
    \setlength{\tabcolsep}{10pt} 
    \renewcommand\arraystretch{0.8} 
    \begin{tabular}{c c c c c c c c c c c c}
        \toprule[1.0pt]
        \multicolumn{4}{c}{$\alpha=1.5\,\sigma^2=0.01$} & \multicolumn{4}{c}{$\alpha=1.8\,\sigma^2=0.01$} & \multicolumn{4}{c}{$\alpha=2.0\,\sigma^2=0.01$}\\
        \cmidrule(lr){1-4} \cmidrule(lr){5-8} \cmidrule(lr){9-12}
        $\gamma$ & Depth & RGB & Fusion & $\gamma$ & Depth & RGB & Fusion  & $\gamma$ & Depth & RGB & Fusion  \\
        \midrule[0.5pt]
        0.5 & 86.31 & 82.01 & 94.17 & 0.5 & 84.68 & 82.98 & 93.96 & 0.5 & 88.05 & 80.45 & 94.65\\
        no noise & 93.85 & 92.85 & 96.04 & no noise & 94.06 & 92.50 & 96.08 & no noise & 94.62 & 93.78 & 96.21\\
        1.3 & 85.52 & 85.62 & 94.58 & 1.3 & 86.66 & 83.78 & 94.72 & 1.3 & 84.68 & 84.54 & 94.27\\
        no noise & 93.02 & 92.25 & 95.62 & no noise & 93.40 & 93.09 & 95.80 & no noise & 93.89 & 91.77 & 95.94\\
        2.0 & 87.01 & 82.95 & 94.58 & 2.0 & 79.19 & 81.63 & 94.03 & 2.0 & 82.46 & 83.92 & 94.37\\
        no noise & 93.30 & 93.40 & 95.97 & no noise & 93.99 & 92.60 & 96.01 & no noise & 93.68 & 92.60 & 95.97\\
        ETMC \cite{p18} & 85.60 & 85.54 & 89.78 & ETMC \cite{p18} & 85.60 & 85.54 & 89.78 & ETMC \cite{p18} & 85.60 & 85.54 & 89.78 \\
        \midrule[0.5pt]
        \multicolumn{4}{c}{$\alpha=1.5\,\sigma^2=0.03$} & \multicolumn{4}{c}{$\alpha=1.8\,\sigma^2=0.03$} & \multicolumn{4}{c}{$\alpha=2.0\,\sigma^2=0.03$}\\
        \cmidrule(lr){1-4} \cmidrule(lr){5-8} \cmidrule(lr){9-12}
        $\gamma$ & Depth & RGB & Fusion & $\gamma$ & Depth & RGB & Fusion & $\gamma$ & Depth & RGB & Fusion \\
        \midrule[0.5pt]
        0.5 & 87.81 & 65.76 & 92.53 & 0.5 & 86.18 & 74.58 & 92.12 & 0.5 & 86.14 & 75.17 & 91.94\\
        no noise & 93.85 & 92.85 & 96.04 & no noise & 94.06 & 92.50 & 96.08 & no noise & 94.62 & 93.78 & 96.21\\
        1.3 & 83.64 & 74.78 & 92.43 & 1.3 & 87.18 & 67.45 & 92.46 & 1.3 & 87.36 & 68.98 & 92.29\\
        no noise & 93.02 & 92.25 & 95.62 & no noise & 93.40 & 93.09 & 95.80 & no noise & 93.89 & 91.77 & 95.94\\
        2.0 & 78.50 & 78.64 & 93.23 & 2.0 & 86.59 & 75.41 & 92.57 & 2.0 & 86.91 & 76.49 & 92.32\\
        no noise & 93.30 & 93.40 & 95.97 & no noise & 93.99 & 92.60 & 96.01 & no noise & 93.68 & 92.60 & 95.97\\
        ETMC \cite{p18} & 85.60 & 85.54 & 89.78 & ETMC \cite{p18} & 85.60 & 85.54 & 89.78 & ETMC \cite{p18} & 85.60 & 85.54 & 89.78 \\
        \bottomrule[1.0pt]
    \end{tabular}
    \label{tab06}
\end{table*}

\begin{table*}[htbp]
	\setlength{\belowdisplayskip}{0pt}
	\setlength{\abovedisplayskip}{0pt}
	\setlength{\abovecaptionskip}{0pt}
	\centering
	\scriptsize
	\caption{Clustering accuracy comparison of various multi-view clustering methods. This table provides a comprehensive comparison of clustering accuracy across various multi-view clustering algorithms, showcasing the superior performance of our model in handling multiple perspectives}
	\setlength{\tabcolsep}{20pt}
	\begin{tabular}{l|*{5}{c}}
        \toprule[1.0pt]
		\backslashbox{Models}{Datasets }&\makebox[1.0em]{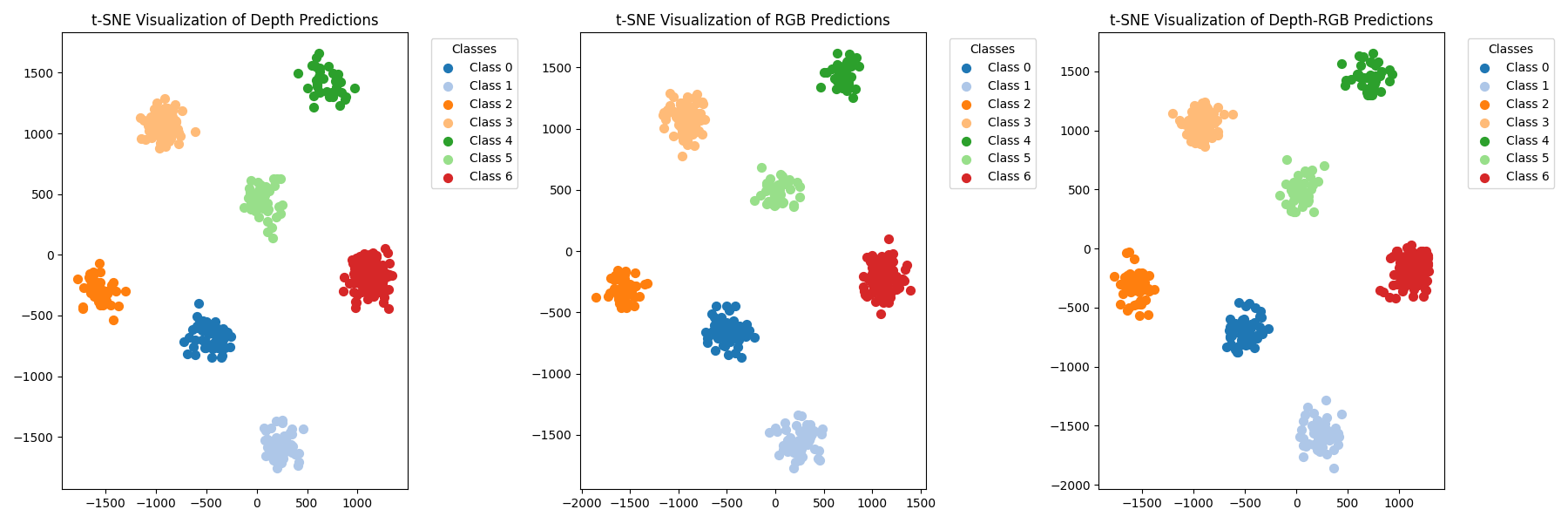}&\makebox[3em]{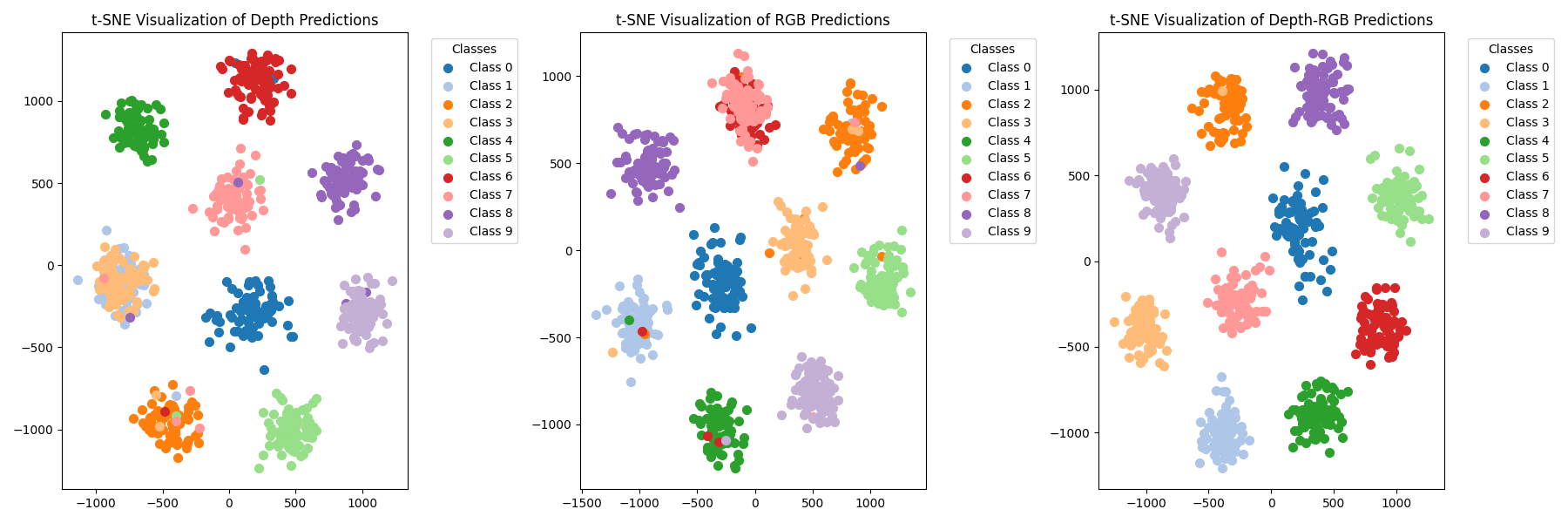}&\makebox[3em]{ORL}&\makebox[4em]{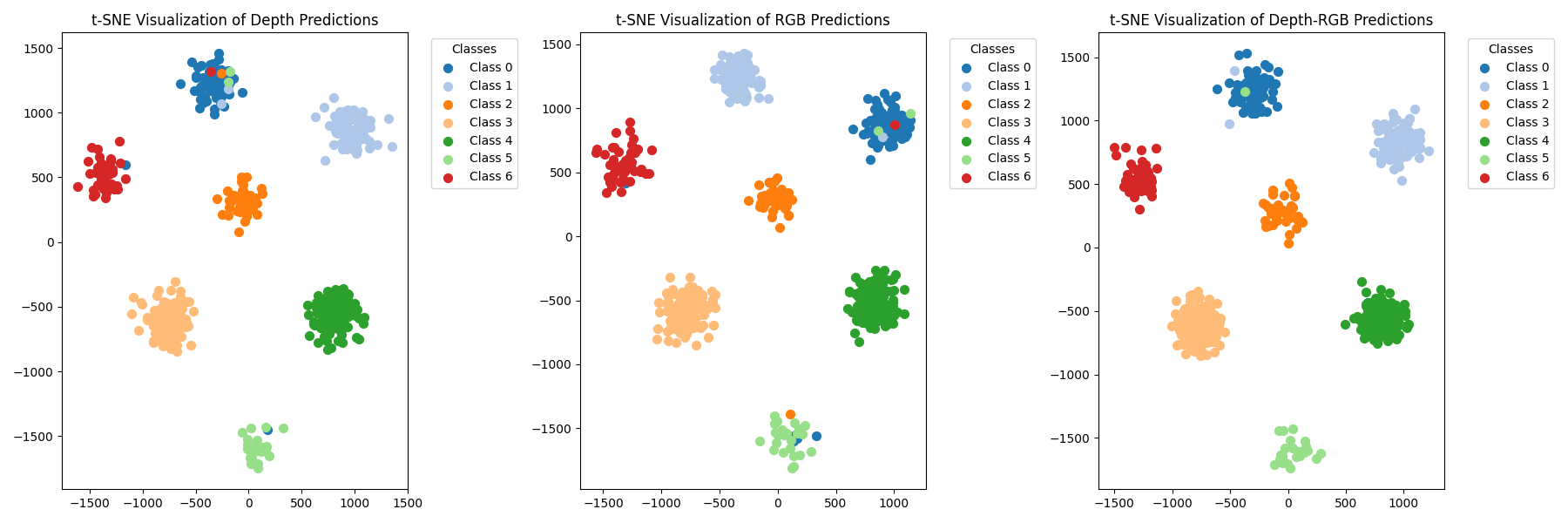}&\makebox[4em]{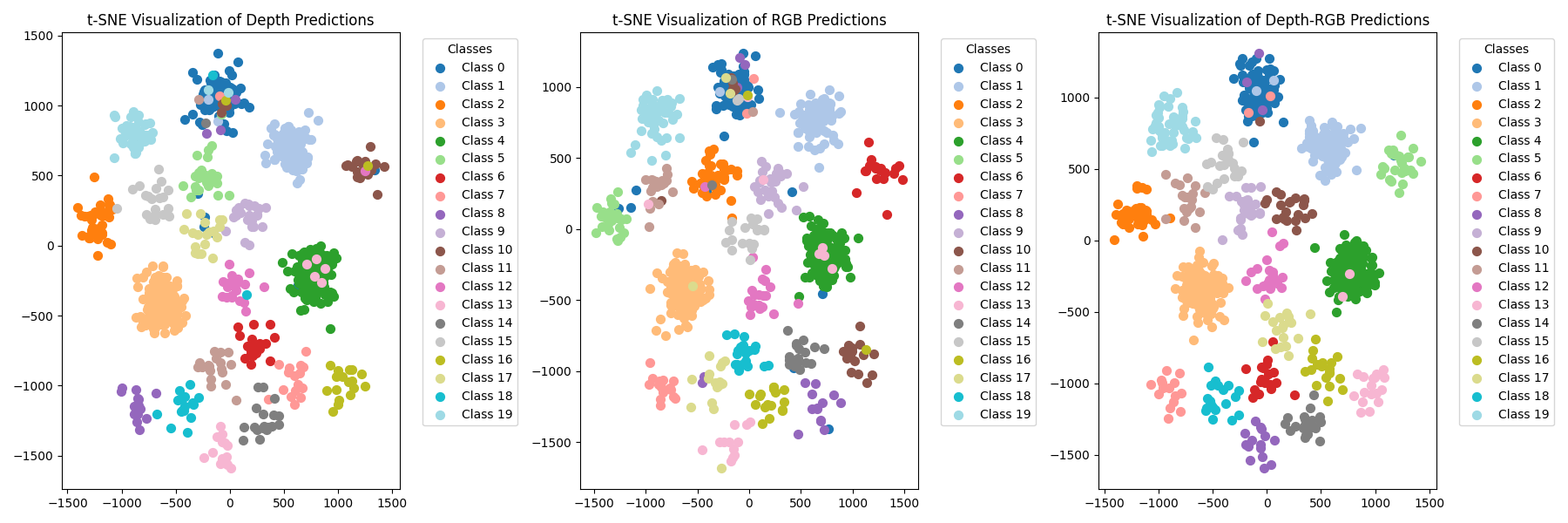}\\
        \midrule[0.5pt]
        K-Means \cite{p61} & 46.3 (1.7) &  73.9 (7.2) &  59.0 (2.4)  &  49.6 (5.8)  &  31.3 (2.5)\\
        MLAN \cite{p56}& 68.1 (0.0) &  77.1 (0.5) &  77.8 (0.0)  &  78.0 (0.0)  &  52.6 (0.8) \\
        SWMC \cite{p55}&78.6 (0.0) &  77.9 (0.0) &  74.8 (0.0)  &  66.5 (0.0)  &  54.1 (0.0) \\
        MSC-IAS \cite{p57}& 47.5 (2.0) &  74.8 (0.3) &  73.3 (2.2)  &  71.3 (4.3)  &  41.9 (2.7) \\
        MCGC \cite{p58}& 75.2 (0.0) & 88.7 (0.0) &  81.0 (0.0)   &  64.3 (0.0)  &  54.6 (0.0) \\
        BMVC \cite{p59}& 63.8 (0.0) &  62.6 (2.2) &  56.7 (0.0)  &  57.9 (0.0)  &  47.4 (0.7) \\
        DSRL \cite{p60}& 83.4 (4.3) &  85.6 (0.3) &  83.6 (1.2)  &  83.8 (1.7)  &  72.9 (1.1) \\
        DSCMC \cite{p75}& - & - &  -  &  86.16 (0.0)  &  76.19 (0.0) \\
        MCBVA \cite{p76}& - & 90.31 (0.0) &  -   &  -  &  74.88 (0.0) \\
        DIMvLN \cite{p77}& - & - &  -   &  -  &  91.28 (1.1) \\
        Ours & \textbf{100 (2.1)} &  \textbf{99.25 (2.6)} &  \textbf{100 (1.9)}  & \textbf{98.37 (3.9)}  &  \textbf{92.59 (2.6)} \\
		\bottomrule[1.0pt]
	\end{tabular}
	\label{tab07}
\end{table*}

\subsection{Results of the Clustering}
\subsubsection{Experimental Analysis}
The experimental results presented in Table \ref{tab06} demonstrate the impact of the Hölder index ($\alpha$) and the parameter $\gamma$ on the performance of KPHD-Net across Depth, RGB, and Fusion modalities. Moderate values of $\alpha$ (e.g., $a=1.7$ or $a=2.0$) and $\gamma$ (e.g., $\gamma=2$) are shown to yield the highest fusion accuracies, reaching up to 92.59\%. This indicates that the PHD is effectively used to adjust KPHD-Net's sensitivity to varying data distributions, leading to more accurate fusion results. These findings highlight KPHD-Net's ability to efficiently integrate multi-view data, enhance clustering accuracy, and demonstrate robustness across different hyperparameter settings.

The experimental results shown in Table \ref{tab07} further confirm KPHD-Net's superior performance across the MSRC-v1, Caltech101-7, and Caltech101-20 datasets, consistently surpassing state-of-the-art multi-view clustering methods. In Fig. \ref{fig:10000}, a visualization of multi-view clustering results across different datasets is provided. The high-dimensional input data is projected onto a two-dimensional subspace using t-SNE, with corresponding data points marked by different colors based on their predicted labels. On the MSRC-v1 dataset, KPHD-Net achieves an unprecedented clustering accuracy of 100\% in the fusion modality, significantly surpassing both traditional and advanced methods. This exceptional result is mirrored on the Caltech101-7 dataset, where an accuracy of 98.37\% is attained, and on the more challenging Caltech101-20 dataset, with KPHD-Net achieving 92.59\%, outperforming competing methods.

It is noted that the experimental results successfully demonstrate the effectiveness of the multi-view fusion method. While the accuracy for individual views is relatively low, the fusion strategy significantly enhances the results. The remarkable performance of KPHD-Net is attributed to its innovative techniques. The integration of PHD and Kalman filtering allows for more flexible and accurate fusion of diverse data views while improving uncertainty estimation. Additionally, KPHD-Net's dynamic weighting of different views enables it to handle noisy or corrupted data effectively, a common challenge in real-world scenarios.

These results highlight KPHD-Net's effectiveness not only in clustering tasks but also in its versatility across both clustering and classification tasks. Unlike traditional multi-view learning models, which may excel in one domain but struggle in another, KPHD-Net consistently delivers superior performance across both, establishing itself as a robust and comprehensive solution for multi-view learning challenges.

\subsection{Ablation Study}

To comprehensively evaluate the impact of each proposed component, we conduct an ablation study on three benchmark datasets: ADE20K, NYUD2, and SUN RGB-D. Tables~\ref{tab:ablation_combined}, present the performance across three experimental configurations—KL divergence only, Hölder divergence, and Dirichlet-based Hölder divergence—under three input settings (RGB, Depth, and fusion). For ADE20K, e fix the divergence parameter at $\alpha=1.7$, while for NYUD2 and SUN RGB-D,$\alpha=2.0$ is used.

Across all datasets, using only KL divergence yields the lowest accuracy. For example, on ADE20K, KL achieves 85.54\% (RGB), 85.60\% (Depth), and 89.78\% (Fusion); on NYUD2, the scores are 65.51\%, 64.91\%, and 72.43\%, respectively; and on SUN RGB-D, 52.48\%, 56.64\%, and 60.80\%. These results suggest that while KL divergence captures some uncertainty, it provides limited benefit in multimodal fusion.

Replacing KL with Hölder divergence leads to consistent performance gains. On ADE20K, accuracy improves to 94.06\% (RGB), 92.50\% (Depth), and 96.08\% (Fusion). Similar trends are observed on NYUD2 (66.27\% across RGB and Depth; 73.64\% Fusion) and SUN RGB-D (53.94\%, 56.43\%, and 61.60\%). This confirms that Hölder divergence better models distributional differences between modalities, especially in complex scenes.

When introducing a Dirichlet prior into the Hölder framework, performance varies slightly. On ADE20K, fusion accuracy further improves to 96.18\%, although RGB accuracy drops slightly to 89.65\%. On NYUD2 and SUN RGB-D, however, Dirichlet modeling slightly degrades fusion performance compared to plain Hölder (70.33\% vs. 73.64\% on NYUD2, 61.10\% vs. 61.60\% on SUN RGB-D). This suggests that the Dirichlet prior benefits datasets with more structured or heterogeneous views (like ADE20K), but may not generalize as effectively on noisier datasets.

In summary, Hölder divergence clearly outperforms KL divergence across all datasets and modalities. Incorporating a Dirichlet prior brings additional benefits in high-quality or diverse-view settings (e.g., ADE20K), but may introduce marginal trade-offs in lower-quality datasets. These findings validate the design choices behind our uncertainty modeling framework.
\begin{table}
    \centering
    \caption{Quantitative evaluation results of the ablation study on the ADE20K ($\alpha=1.7$), NYUD2 ($\alpha=2.0$), and SUN RGB-D ($\alpha=2.0$) datasets. The table shows the impact of different modules or design choices on model performance across all datasets, confirming the effectiveness and necessity of the proposed components in improving the results.}
    \label{tab:ablation_combined}
    \setlength{\tabcolsep}{1.6pt}
    \begin{tabular}{lcccccc}
        \toprule
        \textbf{Dataset} & \textbf{KL} & \textbf{H\"{o}lder} & \textbf{H\"{o}lder (Dir)} & \textbf{RGB (\%)} & \textbf{Depth (\%)} & \textbf{Fusion (\%)} \\
        \midrule
        ADE20K    & \checkmark & $\times$ & $\times$ & 85.54 & 85.60 & 89.78 \\
        ADE20K    & $\times$   & \checkmark & $\times$ & 94.06 & 92.50 & 96.08 \\
        ADE20K    & $\times$   & $\times$ & \checkmark & 89.65 & 93.64 & 96.18 \\
        NYUD2     & \checkmark & $\times$ & $\times$ & 65.51 & 64.91 & 72.43 \\
        NYUD2     & $\times$   & \checkmark & $\times$ & 66.27 & 66.27 & 73.64 \\
        NYUD2     & $\times$   & $\times$ & \checkmark & 62.35 & 61.60 & 70.33 \\
        SUN RGB-D & \checkmark & $\times$ & $\times$ & 52.48 & 56.64 & 60.80 \\
        SUN RGB-D & $\times$   & \checkmark & $\times$ & 53.94 & 56.43 & 61.60 \\
        SUN RGB-D & $\times$   & $\times$ & \checkmark & 50.40 & 55.80 & 61.10 \\
        \bottomrule
    \end{tabular}
\end{table}

\subsection{Hyperparameter Selection via Grid Search Experiments}

The parameters $\alpha$ and $\gamma$ play a critical role in the Proper Hölder Divergence, as their values directly influence model performance. While prior studies often rely on manual, empirically driven tuning, we adopt an automated grid search approach to systematically identify optimal hyperparameters, thereby improving the algorithm’s adaptability and intelligence.

Table~\ref{tabwangge} reports the results of the grid search experiments for hyperparameter selection ($\alpha$ and $\gamma$) on the ADE20K dataset. We evaluate model accuracy (\%) across three modalities: Depth, RGB, and Fusion.

Overall, the Fusion modality consistently outperforms both Depth and RGB across all parameter settings, underscoring the effectiveness of multi-view feature integration. Notably, when $\alpha=1.7$ and $\gamma=0.5$, Fusion achieves the highest accuracy of \textbf{96.14\%}. Other parameter pairs, such as $(\alpha=2.0, \gamma=1.3)$ and $(\alpha=1.1, \gamma=1.0)$, also yield high Fusion accuracy of \textbf{96.14\%} and \textbf{96.01\%}, respectively.

As $\alpha$ increases, the performance of both Depth and RGB modalities fluctuates, with a slight decline observed for large $\alpha$ values (e.g., $\alpha=2.5$), particularly when paired with higher $\gamma$. This trend suggests that excessive regularization or overly strong divergence constraints (i.e., high $\alpha$ and $\gamma$) can impair the performance of individual modalities.

Interestingly, the Depth modality alone often performs competitively and occasionally surpasses RGB, especially when $\alpha=1.3$ and $\gamma=0.5$ (Depth: 94.48\%, RGB: 93.16\%), highlighting the importance of geometric cues in semantic segmentation tasks.

In summary, the grid search results indicate that optimal performance is achieved with moderate values of $\alpha$ and $\gamma$. The Fusion modality benefits most from this tuning, with the best results typically observed when $\alpha$ ranges from 1.7 to 2.0 and $\gamma$ ranges from 0.5 to 1.5.
\begin{table*}[htbp]
    \centering
    \scriptsize 
    \caption{Grid Search Results for Hyperparameter Selection on the ADE20K Dataset with Varying $\alpha$ ($1.1, 1.3, 1.5, 1.7, 2.0, 2.5$) and $\gamma$ ($0.5, 0.8, 1.0, 1.3, 1.5, 2.0$). This table reports the results of a comprehensive grid search performed on the ADE20K dataset, systematically evaluating model performance across all combinations of the hyperparameters $\alpha$ and $\gamma$. The outcomes facilitate identification of the optimal parameter settings, offering insights into the model’s sensitivity to these hyperparameters and providing empirical guidance for future experiments and practical deployments.}
    \setlength{\tabcolsep}{12pt} 
    \renewcommand\arraystretch{0.8} 
    \begin{tabular}{c c c c c c c c c c c c}
        \toprule[1.0pt]
        \multicolumn{4}{c}{$\alpha=1.1$} & \multicolumn{4}{c}{$\alpha=1.3$} & \multicolumn{4}{c}{$\alpha=1.5$}\\
        \cmidrule(lr){1-4} \cmidrule(lr){5-8} \cmidrule(lr){9-12}
        $\gamma$ & Depth & RGB & Fusion & $\gamma$ & Depth & RGB & Fusion  & $\gamma$ & Depth & RGB & Fusion  \\
        \midrule[0.5pt]
        0.5 & 93.19 & 92.08 & 95.69 & 0.5 & 94.48 & 93.16 & 96.01 & 0.5 & 93.51 & 92.22 & 95.69\\
        0.8 & 94.03 & 92.53 & 95.76 & 0.8 & 94.23 & 92.95 & 96.01 & 0.8 & 94.34 & 93.30 & 96.18\\
        1.0 & 94.10 & 92.57 & 96.01 & 1.0 & 93.40 & 92.53 & 95.94 & 1.0 & 94.27 & 92.88 & 96.11\\
        1.3 & 92.91 & 91.66 & 95.87 & 1.3 & 93.68 & 92.12 & 95.80 & 1.3 & 93.78 & 92.05 & 96.08\\
        1.5 & 94.10 & 91.91 & 96.04 & 1.5 & 93.85 & 92.25 & 95.94 & 1.5 & 93.71 & 92.64 & 95.83\\
        2.0 & 93.92 & 93.05 & 95.83 & 2.0 & 93.64 & 93.23 & 95.90 & 2.0 & 93.75 & 93.19 & 95.90\\
        
        \midrule[0.5pt]
        \multicolumn{4}{c}{$\alpha=1.7$} & \multicolumn{4}{c}{$\alpha=2.0$} & \multicolumn{4}{c}{$\alpha=2.5$}\\
        \cmidrule(lr){1-4} \cmidrule(lr){5-8} \cmidrule(lr){9-12}
        $\gamma$ & Depth & RGB & Fusion & $\gamma$ & Depth & RGB & Fusion & $\gamma$ & Depth & RGB & Fusion \\
        \midrule[0.5pt]
        0.5 & 94.17 & 93.33 & 96.14 & 0.5 & 93.68 & 91.87 & 95.80 & 0.5 & 91.59 & 90.83 & 94.89\\
        0.8 & 94.34 & 93.71 & 96.04 & 0.8 & 93.71 & 92.22 & 96.01 & 0.8 & 89.93 & 89.65 & 92.57\\
        1.0 & 93.12 & 91.84 & 95.66 & 1.0 & 94.17 & 93.09 & 96.04 & 1.0 & 89.86 & 89.68 & 92.78\\
        1.3 & 94.17 & 92.08 & 95.80 & 1.3 & 93.99 & 93.30 & 96.14 & 1.3 & 89.93 & 89.34 & 92.43\\
        1.5 & 94.41 & 92.57 & 96.01 & 1.5 & 94.34 & 92.91 & 96.08 & 1.5 & 89.86 & 87.84 & 91.98\\
        2.0 & 93.92 & 93.16 & 96.01 & 2.0 & 93.40 & 91.70 & 95.83 & 2.0 & 87.78 & 85.90 & 88.95\\
        
        \bottomrule[1.0pt]
    \end{tabular}
    \label{tabwangge}
\end{table*}

\subsection{Detailed Backbone Performance on Caltech101-20 for Clustering Task}
Table \ref{tab:backbone} presents the performance of various backbone networks on the Caltech101-20 dataset for the clustering task. DenseNet demonstrates superior performance in both RGB and fusion modes, achieving the highest clustering accuracy of 95.99\% in fusion mode, highlighting its strong feature reuse capabilities. ResNet50 performs consistently well in depth and fusion modes, while UNet and Mamba show significantly lower performance across all modes. Although ViT excels in RGB mode, it slightly lags behind DenseNet and ResNet50 in fusion mode. Overall, DenseNet is the best choice for clustering tasks, while ResNet50, UNet, and ViT each showcase unique strengths in different modes.
\begin{table}
    \centering
    \caption{Backbone Performance Comparison on Caltech101-20 for Clustering Tasks. This table presents the performance of various backbone architectures for clustering tasks on the Caltech101-20 dataset. DenseNet consistently outperforms ResNet50, UNet, and ViT, demonstrating its superiority as the optimal choice for clustering tasks.}
    \setlength{\tabcolsep}{16pt}
    \begin{tabular}{l|ccc}
        \toprule[1.0pt]
        \textbf{Backbone} & \textbf{Depth} & \textbf{RGB} & \textbf{Fusion} \\
        \midrule[0.5pt]
        ResNet50 \cite{p46}& 70.82 & 68.15  & 92.59 \\
        UNet \cite{p78}    & 20.56 & 16.45 & 31.11 \\
        DenseNet \cite{p79}& \textbf{73.36} & \textbf{79.17} & \textbf{95.99} \\
        ViT  \cite{p80}    & 69.48 & 78.75 & 91.05 \\
        Mamba \cite{p81}   & 4.61 & 3.24 & 18.13 \\
        \bottomrule[1.0pt]
    \end{tabular}
    \label{tab:backbone}
\end{table}

\section{Conclusion}
In this study, KPHD-Net, a novel multi-view learning approach employing dynamic evidence fusion for clustering and classification enhancement, is proposed. The network is designed with the integration of Kalman filtering, PHD, and subjective logic theory, forming a robust framework for uncertainty analysis. The effectiveness of KPHD-Net is validated across multiple multi-view datasets, including ADE20K, under varying noise levels and missing data conditions.

It is indicated by the experimental results that KPHD-Net consistently outperforms existing methods, especially in scenarios with noisy or incomplete data. High accuracy is maintained by KPHD-Net even with a noise variance of $\sigma^2=0.03$, and resilience is exhibited across different missing data rates, underscoring its robustness.

In clustering tasks, superior accuracy and stability are demonstrated by KPHD-Net compared to baseline methods on datasets such as MSRC-v1, Caltech101-7, and Caltech101-20, effectively revealing structures in multi-view data.

When KPHD-Net is evaluated with different backbone networks, enhanced accuracy and robustness are consistently demonstrated across various configurations, including ResNet50, DenseNet, Mamba, and ViT. The strengths of these backbones are effectively leveraged to improve clustering performance on datasets such as MSRC-v1, Caltech101-7, and Caltech101-20.

In summary, multi-view learning is advanced by KPHD-Net, delivering reliable classification and clustering results even under challenging conditions. Future work could focus on further enhancing the model's adaptability and exploring its application in more complex scenarios.

\appendices
\subsection{Theoretical Analysis of Hölder Divergence}
The PHD can be analytically computed for distributions belonging to the exponential family. Notably, the Dirichlet distribution, as analyzed above, is a member of the exponential family. This inclusion allows for practical training and offers favorable properties. In the following section, the analytical expression of PHD for two Dirichlet distributions is presented.

The PHD offers a significant advantage in closed-form optimization compared to the KLD, which lacks closed-form solutions for several distributions. It provides closed-form expressions for the PHD for conic and affine exponential families as follows:
\newtheorem{lemma}{\bf{Lemma}}
\begin{lemma}
\label{lemma_1}
    \textbf{(PHD for Conic or Affine Exponential Family) \cite{p19}} For distributions \(p(x; \theta_p)\) and \(p(x; \theta_q)\) that belong to the same exponential family with a conic or affine natural parameter space, PHD can be represented in closed-form:
\begin{equation}
    D_{\alpha,\gamma}^{\mathrm{H}}(p:q)=\frac1\alpha F(\gamma\theta_{p})+\frac1\beta F(\gamma\theta_{q})-F\left(\frac\gamma\alpha\theta_{p}+\frac\gamma\beta\theta_{q}\right),
\end{equation}
where the log-normalizer, denoted as \(F(\theta)\), is a strictly convex function, also known as the cumulant generating function.
\end{lemma}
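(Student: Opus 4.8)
The plan is to unwind the definition of the Proper Hölder divergence and then exploit the closure of the natural parameter space under the positive linear combinations that arise. Writing $p(x;\theta)=\exp\{\theta^{\top}T(x)-F(\theta)+B(x)\}$ as in Definition~\ref{def_5} and specializing to the natural exponential family (base measure $B\equiv0$, which is the parametrization under which the Dirichlet was expressed above), I would substitute this form into each of the three integrals occurring in $D_{\alpha,\gamma}^{\mathrm{H}}(p:q)$ and use the defining identity of the log-normalizer, $\int_{\mathcal{X}}\exp\{\eta^{\top}T(x)\}\,\mathrm{d}x=\exp\{F(\eta)\}$, valid for every $\eta$ in the natural parameter space. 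Pulling the $\theta$-independent constants out of each integral, this gives $\int_{\mathcal{X}}p(x;\theta)^{\gamma}\,\mathrm{d}x=\exp\{F(\gamma\theta)-\gamma F(\theta)\}$ for the two denominator factors and $\int_{\mathcal{X}}p(x;\theta_{p})^{\gamma/\alpha}p(x;\theta_{q})^{\gamma/\beta}\,\mathrm{d}x=\exp\{F(\tfrac{\gamma}{\alpha}\theta_{p}+\tfrac{\gamma}{\beta}\theta_{q})-\tfrac{\gamma}{\alpha}F(\theta_{p})-\tfrac{\gamma}{\beta}F(\theta_{q})\}$ for the numerator.

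The second step is bookkeeping: raise the two denominator integrals to the powers $1/\alpha$ and $1/\beta$, form the quotient inside the logarithm, and apply $-\log$. The contributions $-\tfrac{\gamma}{\alpha}F(\theta_{p})$ and $-\tfrac{\gamma}{\beta}F(\theta_{q})$ from the numerator cancel against the $+\tfrac{\gamma}{\alpha}F(\theta_{p})$ and $+\tfrac{\gamma}{\beta}F(\theta_{q})$ that emerge from $(\exp\{F(\gamma\theta_{p})-\gamma F(\theta_{p})\})^{1/\alpha}$ and $(\exp\{F(\gamma\theta_{q})-\gamma F(\theta_{q})\})^{1/\beta}$ after the sign flip of $-\log$; here the conjugacy $\tfrac{1}{\alpha}+\tfrac{1}{\beta}=1$ is what makes the total Hölder weight of the numerator integrand match the normalization in the denominator (and, in a general setting with $B\not\equiv0$, what keeps the base-measure terms consistent). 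What remains is exactly $\tfrac{1}{\alpha}F(\gamma\theta_{p})+\tfrac{1}{\beta}F(\gamma\theta_{q})-F(\tfrac{\gamma}{\alpha}\theta_{p}+\tfrac{\gamma}{\beta}\theta_{q})$, the asserted closed form.

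The step I expect to be the main obstacle is justifying that every use of $\int_{\mathcal{X}}\exp\{\eta^{\top}T(x)\}\,\mathrm{d}x=\exp\{F(\eta)\}$ above is legitimate, which requires that $\gamma\theta_{p}$, $\gamma\theta_{q}$, and $\tfrac{\gamma}{\alpha}\theta_{p}+\tfrac{\gamma}{\beta}\theta_{q}$ all lie in the natural parameter space $\Theta$; otherwise the corresponding integrals diverge and $F$ is undefined there. This is exactly where the conic or affine hypothesis is needed. If $\Theta$ is a convex cone, then $\gamma>0$, $\gamma/\alpha>0$, $\gamma/\beta>0$ together with $\theta_{p},\theta_{q}\in\Theta$ force all three points into $\Theta$, since $\tfrac{\gamma}{\alpha}\theta_{p}+\tfrac{\gamma}{\beta}\theta_{q}$ is a conic combination. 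If $\Theta$ is affine, one instead writes $\tfrac{\gamma}{\alpha}\theta_{p}+\tfrac{\gamma}{\beta}\theta_{q}=\gamma(\tfrac{1}{\alpha}\theta_{p}+\tfrac{1}{\beta}\theta_{q})$, observes that $\tfrac{1}{\alpha}\theta_{p}+\tfrac{1}{\beta}\theta_{q}$ is an affine combination of points of $\Theta$ (again using $\tfrac{1}{\alpha}+\tfrac{1}{\beta}=1$), and invokes the scaling stability built into the affine model. I would close with a brief remark that for a general base measure $B\not\equiv0$ the same manipulation goes through once each $\int_{\mathcal{X}}p(x;\theta)^{\gamma}\,\mathrm{d}x$ is reinterpreted via the normalizer of the associated $\gamma$-escort distribution; since $B\equiv0$ for the Dirichlet parametrization used in this paper, Lemma~\ref{lemma_1} follows in the stated form without any such adjustment.
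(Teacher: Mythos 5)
The paper does not actually prove Lemma~\ref{lemma_1}; it imports the closed form from \cite{p19} and only uses it downstream (in the proof of Theorem~1 for Dirichlet distributions). Your derivation is correct and is essentially the standard argument behind the cited result: substituting $p(x;\theta)=\exp\{\theta^{\top}T(x)-F(\theta)\}$ into the three integrals of Definition~1, using $\int\exp\{\eta^{\top}T(x)\}\,\mathrm{d}x=\exp\{F(\eta)\}$, and cancelling the $\tfrac{\gamma}{\alpha}F(\theta_p)$ and $\tfrac{\gamma}{\beta}F(\theta_q)$ terms does yield $\tfrac1\alpha F(\gamma\theta_p)+\tfrac1\beta F(\gamma\theta_q)-F(\tfrac\gamma\alpha\theta_p+\tfrac\gamma\beta\theta_q)$, and you correctly identify that the only genuinely substantive hypothesis is the conic or affine structure of the natural parameter space, which guarantees $\gamma\theta_p$, $\gamma\theta_q$, and $\tfrac\gamma\alpha\theta_p+\tfrac\gamma\beta\theta_q$ remain in the domain of $F$. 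Two small imprecisions are worth fixing. First, in the $B\equiv 0$ case the cancellation of the $-\gamma F(\theta_p)/\alpha$ and $-\gamma F(\theta_q)/\beta$ contributions is term-by-term and does not itself use $\tfrac1\alpha+\tfrac1\beta=1$; conjugacy is what makes the base-measure exponents in numerator and denominator both equal to $\gamma B(x)$ in the general case (as your escort remark says), and what makes the divergence nonnegative via H\"older's inequality. Second, the parametrization the paper actually writes down for the Dirichlet in Definition~3 has $\theta=\boldsymbol{\alpha}$ and $B(\boldsymbol{\mu})=-\ln(\boldsymbol{\mu})\not\equiv 0$, so your parenthetical claim that $B\equiv 0$ "in the parametrization used above" is not literally accurate for this paper; it holds in the equivalent parametrization $\theta_i=\alpha_i-1$, which is the one consistent with the log-normalizer $F(\theta)=\sum_k\log\Gamma(\theta_k+1)-\log\Gamma\left(\sum_k(\theta_k+1)\right)$ that the paper's Theorem~1 actually uses. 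With those caveats, your proof cleanly supplies the justification the paper leaves to the reference.
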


\begin{lemma}
\label{lemma_2}
   \textbf{(Symmetric PHD for Conic or Affine Exponential Family) \cite{p19}} For distributions \(p(x; \theta_p)\) and \(p(x; \theta_q)\) from the same exponential family with a conic or affine natural parameter space, the PHD can be calculated in closed form:
\begin{equation}
    S_{\alpha,\gamma}^{\mathrm{H}}(p(x):q(x))=\frac{1}{2}\begin{bmatrix}&F(\gamma\theta_p)+F(\gamma\theta_q)\\&-F\left(\frac{\gamma}{\alpha}\theta_p+\frac{\gamma}{\beta}\theta_q\right)\\&-F\left(\frac{\gamma}{\beta}\theta_p+\frac{\gamma}{\alpha}\theta_q\right)\end{bmatrix},
\end{equation}
where \(F(\theta)\) is also the log-normalizer.
\end{lemma}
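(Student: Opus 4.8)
The plan is to obtain the symmetric closed form directly from the directed closed form of Lemma~\ref{lemma_1}, using only the conjugacy relation $\frac{1}{\alpha}+\frac{1}{\beta}=1$.

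First I would unwind the definition: the symmetric Hölder divergence is the arithmetic mean of the two directed divergences, $S_{\alpha,\gamma}^{\mathrm{H}}(p:q)=\frac{1}{2}\bigl(D_{\alpha,\gamma}^{\mathrm{H}}(p:q)+D_{\alpha,\gamma}^{\mathrm{H}}(q:p)\bigr)$ (equivalently, one starts from the integral definition of PHD and splits the symmetrized logarithm into two summands). Thus the problem reduces to adding two instances of the formula already established in Lemma~\ref{lemma_1}.

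Next I would substitute Lemma~\ref{lemma_1} in both orientations, namely $D_{\alpha,\gamma}^{\mathrm{H}}(p:q)=\frac{1}{\alpha}F(\gamma\theta_{p})+\frac{1}{\beta}F(\gamma\theta_{q})-F\bigl(\frac{\gamma}{\alpha}\theta_{p}+\frac{\gamma}{\beta}\theta_{q}\bigr)$ and the same with $p$ and $q$ interchanged. Before applying the lemma I should check that the mixed natural parameters $\frac{\gamma}{\alpha}\theta_{p}+\frac{\gamma}{\beta}\theta_{q}$ and $\frac{\gamma}{\beta}\theta_{p}+\frac{\gamma}{\alpha}\theta_{q}$ lie in the natural parameter space: since $\gamma,\alpha,\beta>0$ each is a positive combination of $\gamma\theta_{p}$ and $\gamma\theta_{q}$ (admissible when the parameter space is a cone), and since $\frac{1}{\alpha}+\frac{1}{\beta}=1$ each is in fact a convex combination of $\gamma\theta_{p}$ and $\gamma\theta_{q}$ (admissible when the space is affine). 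Summing the two directed expressions, the coefficient of $F(\gamma\theta_{p})$ collapses to $\frac{1}{\alpha}+\frac{1}{\beta}=1$, and likewise for $F(\gamma\theta_{q})$, while the two negative log-normalizer terms are precisely the two mixed arguments appearing in the asserted bracket; dividing by two gives the stated identity.

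The step demanding the most care is conceptual rather than computational: verifying that the conic/affine hypothesis on the natural parameter space is stable under both mixing maps (in both orderings of $p,q$), so that Lemma~\ref{lemma_1} is genuinely applicable term by term. Once that is in place, the algebraic simplification is immediate from $\frac{1}{\alpha}+\frac{1}{\beta}=1$. As a byproduct, strict convexity of $F$ yields $S_{\alpha,\gamma}^{\mathrm{H}}(p:q)\ge 0$ with equality iff $\theta_{p}=\theta_{q}$, a remark worth recording alongside the closed form.
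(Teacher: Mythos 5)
The paper offers no proof of this lemma: it is imported from \cite{p19} and used as a black box in the proof of Theorem~1, so there is nothing internal to compare against. Your derivation is correct and is the standard one from the cited source: writing $S_{\alpha,\gamma}^{\mathrm{H}}(p:q)=\tfrac12\bigl(D_{\alpha,\gamma}^{\mathrm{H}}(p:q)+D_{\alpha,\gamma}^{\mathrm{H}}(q:p)\bigr)$, applying Lemma~\ref{lemma_1} in both orientations, and using $\tfrac1\alpha+\tfrac1\beta=1$ to collapse the coefficients of $F(\gamma\theta_p)$ and $F(\gamma\theta_q)$ gives exactly the stated bracket, and your observation that the mixed arguments are convex combinations of $\gamma\theta_p$ and $\gamma\theta_q$ (hence admissible under the conic/affine hypothesis) is precisely the point that needs checking before Lemma~\ref{lemma_1} can be invoked. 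The only caveat worth flagging is definitional, not logical: the paper's Theorem~1 instead writes the symmetrization as $\tfrac12\bigl(D_{\alpha,\gamma}^{\mathrm{H}}(p:q)+D_{\bar\alpha,\gamma}^{\mathrm{H}}(q:p)\bigr)$ with $\bar\alpha=\alpha/(\alpha-1)$, which by the very closed form of Lemma~\ref{lemma_1} equals $D_{\alpha,\gamma}^{\mathrm{H}}(p:q)$ itself; your convention is the one that actually yields the displayed symmetric expression.
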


\newtheorem{theorem}{\bf{Theorem}}
\begin{theorem}
    For Dirichlet distributions \(p(x; \theta_{\theta})\) and \(p(x; \theta_{\mu})\) within the same exponential family with a conic or affine natural parameter space, the PHD and their symmetric forms are given by:
\begin{equation}
    D_{\alpha,\gamma}^{\mathrm{H}}(p:q)=\frac{1}{\alpha}F\left(\gamma\theta\right)+\frac{1}{\beta}F\left(\gamma\mu\right)-F\left(\frac{\gamma}{\alpha}\theta+\frac{\gamma}{\beta}\mu\right),
\end{equation}
\begin{equation}
    \begin{aligned}
S_{\alpha,\gamma}^{\mathrm{H}}(p(x):q(x))=\frac{D_{\alpha,\gamma}^{\mathrm{H}}(p(x):q(x))+D_{\bar{\alpha},\gamma}^{\mathrm{H}}(q(x):p(x))}{2},
\end{aligned}
\end{equation}
 where\(\begin{array}{rcl}\bar{\alpha}&=&\frac{\alpha}{\alpha-1}\end{array}\), and \(F(\theta)\quad=\quad\sum\limits_{k}\log\Gamma\left(\theta_{k}+1\right) -\log\Gamma\left(\sum_k\left(\theta_k+1\right)\right)\).
\end{theorem}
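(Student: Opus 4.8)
The plan is to derive the theorem as a direct specialization of Lemmas~\ref{lemma_1} and \ref{lemma_2} to the Dirichlet case. First I would recall from the exponential-family representation established in Definition~\ref{def_6} that a Dirichlet distribution with concentration parameters $\boldsymbol{\alpha}$ has natural parameter $\theta_k = \alpha_k - 1$, sufficient statistic $T(\boldsymbol{\mu}) = \log\boldsymbol{\mu}$, and log-normalizer $F(\theta) = \sum_k \log\Gamma(\theta_k+1) - \log\Gamma\!\left(\sum_k(\theta_k+1)\right)$; this is precisely the $F$ claimed in the theorem statement, so the first step is just to make that identification explicit and to note that the natural parameter space $\{\theta : \theta_k > -1\}$ is an open convex (affine) cone-like region, so the hypotheses of Lemma~\ref{lemma_1} and Lemma~\ref{lemma_2} are met.

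Next I would simply substitute this $F$ and the natural parameters $\theta$ (for $p$) and $\mu$ (for $q$) into the closed-form expression of Lemma~\ref{lemma_1}, which immediately yields
\[
D_{\alpha,\gamma}^{\mathrm{H}}(p:q)=\frac{1}{\alpha}F(\gamma\theta)+\frac{1}{\beta}F(\gamma\mu)-F\!\left(\frac{\gamma}{\alpha}\theta+\frac{\gamma}{\beta}\mu\right),
\]
which is the first display in the theorem. For the symmetric form, I would invoke Lemma~\ref{lemma_2}, but rather than leaving it in the three-term bracket form I would rewrite it as an average of two oppositely-oriented divergences: observe that $S_{\alpha,\gamma}^{\mathrm H}$ from Lemma~\ref{lemma_2} can be regrouped as $\tfrac12\big(D_{\alpha,\gamma}^{\mathrm H}(p:q)+D_{\bar\alpha,\gamma}^{\mathrm H}(q:p)\big)$ where $\bar\alpha = \alpha/(\alpha-1) = \beta$ is the conjugate exponent — the key algebraic check here is that swapping $p\leftrightarrow q$ and simultaneously swapping the roles of $\alpha$ and $\beta$ turns the term $F\!\left(\tfrac{\gamma}{\alpha}\theta+\tfrac{\gamma}{\beta}\mu\right)$ into $F\!\left(\tfrac{\gamma}{\beta}\theta+\tfrac{\gamma}{\alpha}\mu\right)$, reproducing exactly the two mixed terms in Lemma~\ref{lemma_2}. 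This gives the second display.

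The bulk of the work is therefore bookkeeping: verifying the natural-parameter/log-normalizer identification for the Dirichlet (a routine match with Definition~\ref{def_6}, modulo the $\theta_k = \alpha_k-1$ shift that makes the $+1$'s appear in $F$), and checking that the affine parameter space assumption truly holds so the lemmas apply. The one genuinely substantive step — and the place I would be most careful — is the regrouping of Lemma~\ref{lemma_2}'s symmetric formula into the $\tfrac12\big(D_{\alpha,\gamma}^{\mathrm H}(p:q)+D_{\bar\alpha,\gamma}^{\mathrm H}(q:p)\big)$ form: one must confirm that $\bar\alpha=\beta$ (equivalently $1/\alpha+1/\beta=1$, the conjugacy relation) and that the $F(\gamma\theta)$ and $F(\gamma\mu)$ terms, each appearing with coefficient $\tfrac1\alpha$ in one divergence and $\tfrac1\beta$ in the other, sum to coefficient $1$ as required in Lemma~\ref{lemma_2}. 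I do not anticipate a deep obstacle — the theorem is essentially a corollary of the two lemmas once the Dirichlet's exponential-family data is plugged in — but the index/conjugate-exponent juggling in the symmetrization is where an error would most easily creep in, so I would write that step out term by term.
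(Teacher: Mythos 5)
Your overall route coincides with the paper's: identify the Dirichlet's exponential-family data from Definition~\ref{def_6} (natural parameter $\theta_k=\alpha_k-1$, log-normalizer $F(\theta)=\sum_k\log\Gamma(\theta_k+1)-\log\Gamma\bigl(\sum_k(\theta_k+1)\bigr)$) and substitute into Lemma~\ref{lemma_1} and Lemma~\ref{lemma_2}. The paper's proof additionally expands each $F$-term via $\Gamma(x+1)=x\,\Gamma(x)$, but that is cosmetic simplification; your derivation of the first display is essentially identical to the paper's.

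The genuine gap is in your symmetrization step. You assert that swapping $p\leftrightarrow q$ while simultaneously replacing $\alpha$ by $\bar\alpha=\beta$ turns $F\bigl(\frac{\gamma}{\alpha}\theta+\frac{\gamma}{\beta}\mu\bigr)$ into the other mixed term $F\bigl(\frac{\gamma}{\beta}\theta+\frac{\gamma}{\alpha}\mu\bigr)$. It does not: the mixed term of $D_{\bar\alpha,\gamma}^{\mathrm H}(q:p)$ is $F\bigl(\frac{\gamma}{\bar\alpha}\mu+\frac{\gamma}{\bar\beta}\theta\bigr)=F\bigl(\frac{\gamma}{\beta}\mu+\frac{\gamma}{\alpha}\theta\bigr)$, which is the \emph{same} term you started from, and indeed all three terms match, so $D_{\bar\alpha,\gamma}^{\mathrm H}(q:p)=D_{\alpha,\gamma}^{\mathrm H}(p:q)$ identically. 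Your proposed average therefore collapses to $D_{\alpha,\gamma}^{\mathrm H}(p:q)$ itself and does not reproduce the four-term bracket of Lemma~\ref{lemma_2}. The correct regrouping swaps only \emph{one} of the two (arguments or exponent), i.e.\ $S_{\alpha,\gamma}^{\mathrm H}(p:q)=\frac{1}{2}\bigl(D_{\alpha,\gamma}^{\mathrm H}(p:q)+D_{\alpha,\gamma}^{\mathrm H}(q:p)\bigr)=\frac{1}{2}\bigl(D_{\alpha,\gamma}^{\mathrm H}(p:q)+D_{\bar\alpha,\gamma}^{\mathrm H}(p:q)\bigr)$. In fairness, the theorem statement itself writes the second summand as $D_{\bar\alpha,\gamma}^{\mathrm H}(q:p)$, and the paper's proof sidesteps the issue by never deriving the averaged form at all (it simply restates Lemma~\ref{lemma_2}'s bracket and expands $F(w)$ for $w=\frac{\gamma}{\beta}\theta+\frac{\gamma}{\alpha}\mu$); but the term-by-term verification you rightly flag as the delicate step would, if carried out, show that the double swap is the identity and the claimed decomposition fails as written.
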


\begin{proof}
	Using Lemma. \ref{lemma_1}--\ref{lemma_2}, for the term of $\frac{1}{\alpha} F(\gamma \theta)$, we can derive the following inferences: 
	\begin{equation}
		\label{eqn_25}
		\begin{array}{l}
			\frac{1}{\alpha }\left[ {\sum\limits_k {\log } \Gamma \left( {\gamma {\theta _k} + 1} \right) - \log \Gamma \left( {\sum\limits_k {\left( {\gamma {\theta _k} + 1} \right)} } \right)} \right]\\
			\begin{array}{*{20}{c}}
				{}
			\end{array} = \frac{1}{\alpha }\left[ {\begin{array}{*{20}{c}}
					{k\log \gamma  + \sum\limits_k {\log } {\theta _k} + \sum\limits_k {\log } \Gamma \left( {\gamma {\theta _k}} \right)}\\
					{ - \log \Gamma \left( {\sum\limits_k \gamma  {\theta _k}} \right)\begin{array}{*{20}{c}}
							{}&{}&{}&{}&{}&{}&{}
					\end{array}}\\
					{ - \sum\limits_k {\log } \left( {\sum\limits_k \gamma  {\theta _k} + k - 1} \right)\begin{array}{*{20}{c}}
							{}&{}&{}
					\end{array}}
			\end{array}} \right].
		\end{array}
	\end{equation}
	For the term of $\frac{1}{\beta} F(\gamma \mu)$, we can obtain:
	\begin{equation}
		\label{eqn_26}
		\begin{array}{l}
			\frac{1}{\beta }\left[ {\sum\limits_k {\log } \Gamma \left( {\gamma {\mu _k} + 1} \right) - \log \Gamma \left( {\sum\limits_k {\left( {\gamma {\mu _k} + 1} \right)} } \right)} \right]\\
			\begin{array}{*{20}{c}}
				{}&{}
			\end{array} = \frac{1}{\beta }\left[ {\begin{array}{*{20}{c}}
					{k\log \gamma  + \sum\limits_k {\log } {\mu _k} + \sum\limits_k {\log } \Gamma \left( {\gamma {\mu _k}} \right)}\\
					{ - \log \Gamma \left( {\sum\limits_k \gamma  {\mu _k}} \right)\begin{array}{*{20}{c}}
							{}&{}&{}&{}&{}&{}&{}
					\end{array}}\\
					{ - \sum\limits_k {\log } \left( {\sum\limits_k \gamma  {\mu _k} + k - 1} \right)\begin{array}{*{20}{c}}
							{}&{}&{}
					\end{array}}
			\end{array}} \right].
		\end{array}
	\end{equation}
	For the term of $F\left(z\right)$, we can get:
	\begin{equation}
		\label{eqn_27}
		\begin{array}{l}
			\sum\limits_k {\log } \Gamma \left( {{z_k} + 1} \right) - \log \Gamma \left( {\sum\limits_k {\left( {{z_k} + 1} \right)} } \right)\\
			\begin{array}{*{20}{c}}
				{}&{}
			\end{array} = \left[ {\begin{array}{*{20}{c}}
					{\sum\limits_k {\log } \left( {{z_k}} \right) + \sum\limits_k {\log } \Gamma \left( {{z_k}} \right)\begin{array}{*{20}{c}}
							{}&{}&{}&{}
					\end{array}}\\
					{ - \log \Gamma \left( {\sum\limits_k {{z_k}} } \right) - \sum\limits_k {\log } \left( {\sum\limits_k {{z_k}}  + k - 1} \right)}
			\end{array}} \right],
		\end{array}
	\end{equation}
	where $z=\frac{\gamma}{\alpha}\theta+\frac{\gamma}{\beta}\mu$.
	
	Using Lemma. \ref{lemma_2}, their symmetric form can be expressed as:
	\begin{equation}
		\label{eqn_28}
		{S_{\alpha ,\gamma }^{\rm{H}}(p(x):q(x)) = \frac{1}{2}\left[ {\begin{array}{*{20}{l}}
					{F\left( {\gamma \theta } \right) + F\left( {\gamma \mu } \right)}\\
					{ - F\left( {\frac{\gamma }{\alpha }\theta  + \frac{\gamma }{\beta }\mu } \right)}\\
					{ - F\left( {\frac{\gamma }{\beta }\theta  + \frac{\gamma }{\alpha }\mu } \right)}
			\end{array}} \right]}.
	\end{equation}
	After that, we can obtain:
	\begin{equation}
		\label{eqn_29}
		\begin{array}{l}
			F\left( w \right) = \sum\limits_k {\log } \Gamma \left( {{w_k} + 1} \right) - \log \Gamma \left( {\sum\limits_k {\left( {{w_k} + 1} \right)} } \right)\begin{array}{*{20}{c}}
			\end{array}\\
			\begin{array}{*{20}{c}}
				{}&{}&{}
			\end{array} = \sum\limits_k {\log } \left( {{w_k}} \right) + \sum\limits_k {\log } \Gamma \left( {{w_k}} \right) - \log \Gamma \left( {\sum\limits_k {{w_k}} } \right)\\
			\begin{array}{*{20}{c}}
				{}&{}&{}
			\end{array} - \sum\limits_k {\log } \left( {\sum\limits_k {{w_k}}  + k - 1} \right),
		\end{array}
	\end{equation}
	where $w=\frac{\gamma}{\beta}\theta+\frac{\gamma}{\alpha}\mu$.
\end{proof}

\begin{theorem}
    For variational inference using Dirichlet models, the PHD provides a tighter Evidence Lower Bound (ELBO) compared to the Kullback-Leibler (KL) divergence $D_{\text{KL}}(p \| q)$.
\end{theorem}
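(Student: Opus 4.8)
The plan is to make precise the sense in which the PHD-regularized objective $\mathcal{L}_{HDR}(\theta, \phi; x)$ of Eq.~(\ref{equ_102}) is a \emph{tighter} lower bound than the classical $\mathcal{L}_{ELBO}(\theta, \phi; x)$ of Eq.~(\ref{equ_100}): writing $q=q_\phi(z|x)$ and $p=p(z)$, I want to show
\[
\mathcal{L}_{ELBO}(\theta, \phi; x)\;\le\;\mathcal{L}_{HDR}(\theta, \phi; x)\;\le\;\log p_\theta(x),
\]
with the left inequality strict whenever $q\neq p$. Because the two objectives share the reconstruction term $\mathbb{E}_{q}[\log p_\theta(x|z)]$, both inequalities reduce to a single comparison of the two regularizers, namely $0\le\lambda\,D_{\alpha,\gamma}^{\mathrm{H}}(q\,\|\,p)\le D_{\mathrm{KL}}(q\,\|\,p)$. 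The left half is exactly Hölder's inequality applied to the numerator in Definition~1 (non-negativity of PHD), and it also delivers $\mathcal{L}_{HDR}\le\log p_\theta(x)$ via the standard decomposition $\log p_\theta(x)=\mathcal{L}_{ELBO}+D_{\mathrm{KL}}(q\,\|\,p(z|x))$ once the residual term is checked to stay non-negative. Hence the whole theorem rests on the right half.

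For that, I would use the fact that PHD is, up to a constant factor, a Rényi divergence between escort densities. With the conjugacy $1/\alpha+1/\beta=1$ and the $\gamma$-escorts $\tilde q=q^{\gamma}/\!\int q^{\gamma}$ and $\tilde p=p^{\gamma}/\!\int p^{\gamma}$, a short computation collapses the ratio inside the logarithm of Definition~1 to $\int\tilde q^{\,1/\alpha}\tilde p^{\,1/\beta}\,\mathrm{d}x$, so that $D_{\alpha,\gamma}^{\mathrm{H}}(q\,\|\,p)=\tfrac1\beta D^{R}_{1/\alpha}(\tilde q\,\|\,\tilde p)$, where $D^{R}_{s}$ is the order-$s$ Rényi divergence. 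Since $\alpha>1$ forces $1/\alpha<1$, monotonicity of $s\mapsto D^{R}_{s}$ together with the limit $D^{R}_{s}\to D_{\mathrm{KL}}$ as $s\to1^{-}$ gives $D_{\alpha,\gamma}^{\mathrm{H}}(q\,\|\,p)\le\tfrac1\beta D_{\mathrm{KL}}(\tilde q\,\|\,\tilde p)$; for $\gamma=1$ the escorts coincide with the densities, so this already reads $\le\tfrac1\beta D_{\mathrm{KL}}(q\,\|\,p)$ and the claim holds for all $\lambda\in(0,\beta]$. For the Dirichlet specialization I would instead route through the closed form of Theorem~1 and Lemmas~\ref{lemma_1}--\ref{lemma_2}: there $D_{\alpha,\gamma}^{\mathrm{H}}$ is a weighted Jensen gap of the strictly convex log-normalizer $F$, which splits as $\tfrac1\alpha B_F(\gamma\theta,z)+\tfrac1\beta B_F(\gamma\mu,z)$, a convex combination of Bregman divergences toward the mixed parameter $z=\tfrac\gamma\alpha\theta+\tfrac\gamma\beta\mu$, whereas $D_{\mathrm{KL}}$ between the same Dirichlet distributions is a single Bregman divergence $B_F$; the comparison then reduces to an elementary bound between a weighted Jensen gap and a Bregman divergence of one convex function, with strict convexity of $F$ supplying strictness when $q\neq p$.

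The main obstacle is the regime $\gamma\neq1$. There the escorts $\tilde q,\tilde p$ differ from $q,p$, so the chain above produces $\tfrac1\beta D_{\mathrm{KL}}(\tilde q\,\|\,\tilde p)$ rather than a bound phrased in terms of $D_{\mathrm{KL}}(q\,\|\,p)$, and one must either (i) control $D_{\mathrm{KL}}(\tilde q\,\|\,\tilde p)$ by $D_{\mathrm{KL}}(q\,\|\,p)$ — perturbatively for $\gamma$ near $1$, and for the Dirichlet case directly from the gamma/digamma identities appearing in Eqs.~(\ref{eqn_25})--(\ref{eqn_29}) — or (ii) restrict the admissible range of the regularization weight to $\lambda\in(0,\lambda_0(\alpha,\beta,\gamma)]$ and make that dependence explicit, since the ``tighter'' conclusion is unconditional only once $\lambda$ is tied to the Hölder exponents. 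The remaining pieces — non-negativity of the residual evidence term and strictness of every inequality invoked — follow routinely from strict convexity of $F$, equivalently from the equality case of Hölder's inequality.
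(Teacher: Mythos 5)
Your proposal bears little resemblance to the paper's own argument, because the paper does not really give one: after writing down the KL divergence and the closed-form PHD, the published proof asserts that ``since the PHD is more flexible and tunable through the parameters $\alpha,\beta,\gamma$, it can better fit the true posterior'' and stops, so you have not missed a hidden lemma --- the paper's proof is a restatement of the claim. Your reduction of the theorem to the scalar comparison $0\le\lambda D^{\mathrm{H}}_{\alpha,\gamma}(q\|p)\le D_{\mathrm{KL}}(q\|p)$ is the right way to make the statement precise, and your two technical ingredients are sound: the identity $D^{\mathrm{H}}_{\alpha,\gamma}(q\|p)=\tfrac1\beta D^{R}_{1/\alpha}(\tilde q\|\tilde p)$ for the $\gamma$-escorts checks out against Definition~1, monotonicity of the R\'enyi order then gives the bound for $\gamma=1$, and the reading of Lemma~\ref{lemma_1} as a skew Jensen gap of $F$, dominated by $\tfrac1\alpha B_F$ via the gradient inequality, is also correct.

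Two genuine gaps remain. First, your claim that $\mathcal{L}_{HDR}\le\log p_\theta(x)$ ``follows routinely'' is false, and this is not a technicality: from $\mathbb{E}_q[\log p_\theta(x|z)]=\log p_\theta(x)-D_{\mathrm{KL}}(q\|p_\theta(z|x))+D_{\mathrm{KL}}(q\|p(z))$ one gets $\mathcal{L}_{HDR}=\log p_\theta(x)-D_{\mathrm{KL}}(q\|p_\theta(z|x))+\bigl(D_{\mathrm{KL}}(q\|p(z))-\lambda D^{\mathrm{H}}_{\alpha,\gamma}(q\|p(z))\bigr)$, so whenever your inequality $\lambda D^{\mathrm{H}}<D_{\mathrm{KL}}$ holds strictly and $q$ is close to the true posterior (e.g.\ $q=p_\theta(z|x)\ne p(z)$), the objective strictly exceeds $\log p_\theta(x)$. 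A quantity larger than the evidence is not a tighter evidence lower bound; an honest version of the theorem must either weaken ``tighter ELBO'' to ``larger surrogate objective'' or exhibit conditions under which the residual term stays non-negative. Second, the $\gamma\ne1$ regime, which you flag but defer, cannot be waved away: the paper's experiments use $\gamma\in\{0.5,\dots,8\}$ throughout, the escort divergence $D_{\mathrm{KL}}(\tilde q\|\tilde p)$ is not in general dominated by $D_{\mathrm{KL}}(q\|p)$, and the Jensen gap in Lemma~\ref{lemma_1} is taken at $\gamma\theta$ and $\gamma\mu$ rather than at the natural parameters of $q$ and $p$, so the Bregman comparison also acquires $\gamma$-dependent constants. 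Until either an explicit admissible range $\lambda\le\lambda_0(\alpha,\beta,\gamma)$ is exhibited or the escort KL is controlled for Dirichlet parameters, your argument proves the comparison only for $\gamma=1$ and suitably small $\lambda$ --- which is nevertheless strictly more than the paper establishes.
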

\begin{proof}
    For the KLD in Dirichlet Models, we have $ D_{\text{KL}}(q(z|x) \| p(z)) = \int q(z|x) \log \frac{q(z|x)}{p(z)} \, \text{d}z.$ For the PHD, we have:
\begin{equation}
    \begin{array}{l}
    D_{\alpha ,\gamma }^H(q(z|x)||p(z))
    \begin{array}{*{20}{c}}
    \end{array} = \left( \begin{array}{l}
    \frac{1}{\alpha }F(\gamma {\theta _{q(z|x)}}) + \frac{1}{\beta }F(\gamma {\theta _{p(z)}})\\
     - F\left( {\frac{\gamma }{\alpha }{\theta _{q(z|x)}} + \frac{\gamma }{\beta }{\theta _{p(z)}}} \right)
    \end{array} \right).
    \end{array}
\end{equation}

The ELBO with the PHD becomes:
\begin{equation}
    \text{ELBO}_{\text{H}} = \mathbb{E}_{q(z|x)} [\log p(x|z)] - D_{\alpha, \gamma}^{H}(q(z|x) \| p(z)).
\end{equation}
To show that the ELBO with the PHD is tighter than the ELBO with the KL divergence, we need to show that: $\text{ELBO}_{\text{H}} \ge \text{ELBO}_{\text{KL}}$.

Since the PHD is more flexible and tunable through the parameters $\alpha, \beta, \gamma$, it can better fit the true posterior distribution and reduce the gap between the variational distribution and the true posterior.
\end{proof}

\begin{theorem}
   For variational inference using Dirichlet models, the PHD provides a tighter Evidence Lower Bound (ELBO) compared to the Kullback-Leibler (KL) divergence $D_{\text{KL}}(p \| q)$.
\end{theorem}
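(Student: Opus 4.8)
The plan is to reduce the ELBO comparison to a pointwise inequality between divergences and then to express the PHD as a rescaled Rényi divergence. Because $\mathrm{ELBO}_{\mathrm H}$ and $\mathrm{ELBO}_{\mathrm{KL}}$ share the reconstruction term $\mathbb{E}_{q(z|x)}[\log p(x|z)]$, for any variational posterior $q$ we have $\mathrm{ELBO}_{\mathrm H}(q)-\mathrm{ELBO}_{\mathrm{KL}}(q)=D_{\mathrm{KL}}(q(z|x)\|p(z))-D_{\alpha,\gamma}^{\mathrm H}(q(z|x)\|p(z))$, so it suffices to establish the divergence inequality $D_{\alpha,\gamma}^{\mathrm H}(q\|p)\le D_{\mathrm{KL}}(q\|p)$. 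Granting this, evaluating $\mathrm{ELBO}_{\mathrm H}$ at the KL-optimal posterior $q^{\star}_{\mathrm{KL}}$ gives $\sup_q\mathrm{ELBO}_{\mathrm H}(q)\ge\mathrm{ELBO}_{\mathrm H}(q^{\star}_{\mathrm{KL}})\ge\mathrm{ELBO}_{\mathrm{KL}}(q^{\star}_{\mathrm{KL}})=\sup_q\mathrm{ELBO}_{\mathrm{KL}}(q)$, i.e.\ the PHD-regularised bound dominates the KL bound, which is the asserted tightness.

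The central step is a cancellation identity. Set $Z_p=\int_{\mathcal X}p^{\gamma}\,\mathrm dx$, $Z_q=\int_{\mathcal X}q^{\gamma}\,\mathrm dx$ and define the tempered (escort) densities $\tilde p=p^{\gamma}/Z_p$ and $\tilde q=q^{\gamma}/Z_q$. Substituting $p^{\gamma/\alpha}=Z_p^{1/\alpha}\tilde p^{1/\alpha}$ and $q^{\gamma/\beta}=Z_q^{1/\beta}\tilde q^{1/\beta}$ into the definition of the PHD, the factors $Z_p^{1/\alpha}Z_q^{1/\beta}$ cancel against the denominator and leave $D_{\alpha,\gamma}^{\mathrm H}(p:q)=-\log\int_{\mathcal X}\tilde p^{1/\alpha}\tilde q^{1/\beta}\,\mathrm dx$. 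Since $1/\alpha+1/\beta=1$ and $\alpha>1$ (conjugate exponents), writing $\lambda=1/\alpha\in(0,1)$ identifies this with $(1-\lambda)\,D_{\lambda}(\tilde p\|\tilde q)$, a positive multiple of the Rényi divergence of order $\lambda$ between the tempered laws.

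For the canonical case $\gamma=1$ the tempering is trivial, $\tilde p=p$ and $\tilde q=q$, so $D_{\alpha,1}^{\mathrm H}(q\|p)=(1-\lambda)D_{\lambda}(q\|p)$. I would then invoke two standard properties of the Rényi divergence: monotonicity in the order, which gives $D_{\lambda}(q\|p)\le\lim_{\lambda'\to1^-}D_{\lambda'}(q\|p)=D_{\mathrm{KL}}(q\|p)$ for $\lambda\in(0,1)$, together with the factor $1-\lambda\in(0,1)$, to conclude $D_{\alpha,1}^{\mathrm H}(q\|p)\le(1-\lambda)D_{\mathrm{KL}}(q\|p)\le D_{\mathrm{KL}}(q\|p)$, strictly unless $q=p$ almost everywhere. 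For the Dirichlet family this can alternatively be verified from the closed forms of Lemmas~\ref{lemma_1} and \ref{lemma_2}: strict convexity of the log-normalizer $F$ and $1/\alpha+1/\beta=1$ already give $D_{\alpha,\gamma}^{\mathrm H}\ge0$ by Jensen applied to the convex combination $\tfrac\gamma\alpha\theta_{q}+\tfrac\gamma\beta\theta_{p}$, and a curvature estimate on $F$ along the segment joining $\theta_q$ and $\theta_p$ bounds the quantity above by the Bregman-divergence form of $D_{\mathrm{KL}}$.

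The hard part is the general case $\gamma\neq1$: the identity above only bounds $D_{\alpha,\gamma}^{\mathrm H}(q\|p)$ by $(1-1/\alpha)D_{\mathrm{KL}}(\tilde q\|\tilde p)$, a Kullback--Leibler divergence between the \emph{tempered} distributions, and there is no universal inequality relating $D_{\mathrm{KL}}(\tilde q\|\tilde p)$ to $D_{\mathrm{KL}}(q\|p)$, although for $\gamma<1$ tempering generally contracts divergences. I would therefore present the clean inequality for $\gamma=1$, extend it by continuity of $(\alpha,\gamma)\mapsto D_{\alpha,\gamma}^{\mathrm H}(q\|p)$ to a neighbourhood, and note that since $\alpha$ and $\gamma$ are optimised during training the attainable value $\inf_{\alpha,\gamma}D_{\alpha,\gamma}^{\mathrm H}(q\|p)$ does not exceed $D_{\mathrm{KL}}(q\|p)$ — which is the rigorous content behind the informal ``PHD is more flexible and tunable'' argument, now justified at least along the slice $\gamma=1$.
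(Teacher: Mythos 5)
Your proposal takes a genuinely different route from the paper, and it is substantially more rigorous. The paper's own proof writes down the two divergences and the PHD-regularised objective, states that one must show $\text{ELBO}_{\text{H}} \ge \text{ELBO}_{\text{KL}}$, and then simply asserts that this holds because the PHD is ``more flexible and tunable'' --- no inequality is actually derived. You, by contrast, correctly reduce the comparison to the pointwise divergence inequality $D_{\alpha,\gamma}^{\mathrm H}(q\|p)\le D_{\mathrm{KL}}(q\|p)$ (since the reconstruction terms cancel), and your escort-density computation is exact: the normalisers $Z_p^{1/\alpha}Z_q^{1/\beta}$ do cancel, leaving $D_{\alpha,\gamma}^{\mathrm H}(p:q)=-\log\int\tilde p^{1/\alpha}\tilde q^{1/\beta}\,\mathrm dx=(1-\lambda)D_\lambda(\tilde p\|\tilde q)$ with $\lambda=1/\alpha\in(0,1)$. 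For $\gamma=1$ this plus monotonicity of the R\'enyi divergence in its order, together with the prefactor $1-\lambda<1$, gives a complete and correct proof of the inequality --- something the paper never supplies. What your approach buys is an actual quantitative mechanism (R\'enyi order $<1$ plus rescaling) explaining \emph{why} the PHD penalty is smaller than the KL penalty; what the paper's approach buys is nothing beyond the restatement of the claim.

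Two caveats remain, both of which you largely anticipate. First, the theorem is asserted for all $\gamma>0$, and your argument only closes the case $\gamma=1$; the reduction for general $\gamma$ yields a KL divergence between \emph{tempered} laws, which is not universally comparable to $D_{\mathrm{KL}}(q\|p)$. Your fallback via $\inf_{\alpha,\gamma}D_{\alpha,\gamma}^{\mathrm H}\le D_{\alpha,1}^{\mathrm H}\le D_{\mathrm{KL}}$ is a reasonable rigorous substitute, but it proves a weaker statement than the one claimed; you should state explicitly that the theorem as written is established only on the slice $\gamma=1$ (the paper does not establish it for any $\gamma$). Second, neither you nor the paper addresses whether a \emph{larger} objective is genuinely ``tighter'': since $\log p(x)=\text{ELBO}_{\text{KL}}(q)+D_{\mathrm{KL}}(q(z|x)\|p(z|x))$, replacing the KL penalty by something strictly smaller can push $\text{ELBO}_{\text{H}}$ above $\log p(x)$, so it is no longer guaranteed to be an evidence \emph{lower bound}. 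That is a defect of the theorem statement rather than of your argument, but it is worth flagging if you present this proof.
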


\begin{proof}
    For the KLD in Dirichlet Models, we have $ D_{\text{KL}}(q(z|x) \| p(z)) = \int q(z|x) \log \frac{q(z|x)}{p(z)} \, \text{d}z.$ For the PHD, we have:
    \begin{equation}
        \begin{array}{l}
        D_{\alpha ,\gamma }^H(q(z|x)||p(z))
        \begin{array}{*{20}{c}}
        \end{array} = \left( \begin{array}{l}
        \frac{1}{\alpha }F(\gamma {\theta _{q(z|x)}}) + \frac{1}{\beta }F(\gamma {\theta _{p(z)}})\\
         - F\left( {\frac{\gamma }{\alpha }{\theta _{q(z|x)}} + \frac{\gamma }{\beta }{\theta _{p(z)}}} \right)
        \end{array} \right).
        \end{array}
    \end{equation}
The ELBO with the PHD becomes:
    \begin{equation}
        \text{ELBO}_{\text{H}} = \mathbb{E}_{q(z|x)} [\log p(x|z)] - D_{\alpha, \gamma}^{H}(q(z|x) \| p(z)).
    \end{equation}
    To show that the ELBO with the PHD is tighter than the ELBO with the KL divergence, we need to show that: $\text{ELBO}_{\text{H}} \ge \text{ELBO}_{\text{KL}}$.

    Since the PHD is more flexible and tunable through the parameters $\alpha, \beta, \gamma$, it can better fit the true posterior distribution and reduce the gap between the variational distribution and the true posterior.
\end{proof}

\begin{theorem}
    The PHD provides a more flexible regularization term compared to the Kullback-Leibler (KL) divergence, particularly in capturing multi-modal distributions and avoiding mode collapse in variational inference.
\end{theorem}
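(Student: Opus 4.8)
The plan is to split the claim into two parts that can be argued separately: (i) the KL regularizer is itself a boundary/limit member of the two--parameter PHD family, so PHD strictly enlarges the space of admissible regularizers; and (ii) on this enlarged space there are parameter choices whose induced penalty landscape has \emph{no} spurious single--mode stationary point, whereas the KL penalty does, which is the precise sense of ``avoiding mode collapse.'' For part (i) I would start from the closed form derived in the preceding theorem, $D_{\alpha,\gamma}^{\mathrm H}(p:q)=\tfrac1\alpha F(\gamma\theta)+\tfrac1\beta F(\gamma\mu)-F\!\left(\tfrac\gamma\alpha\theta+\tfrac\gamma\beta\mu\right)$ with $F$ the strictly convex log--normalizer, and read the right--hand side as a $\gamma$--scaled Jensen gap of $F$ at the dilated natural parameters. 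Letting $\alpha\to1^{+}$ (hence $\beta=\alpha/(\alpha-1)\to\infty$) the ratio inside the $-\log$ of the PHD definition tends to $1$, so the divergence vanishes to leading order; expanding to the next order by an l'Hôpital/Taylor computation (the same mechanism by which R\'enyi divergence recovers KL at order $1$) should yield a $\gamma$--scaled Kullback--Leibler term. This places KL on the boundary $\alpha=1$ of the PHD family and establishes ``at least as flexible.''

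For part (ii), I would fix a multi--modal target and compare the two regularizers over a single--Dirichlet variational family. Concretely, take $p=\tfrac12\mathrm{Dir}(\mu\mid\boldsymbol a)+\tfrac12\mathrm{Dir}(\mu\mid\boldsymbol b)$ with well--separated concentration vectors $\boldsymbol a,\boldsymbol b$ and $q=\mathrm{Dir}(\mu\mid\boldsymbol c)$. Since $D_{\mathrm{KL}}(q\|p)$ is mode--seeking (zero--forcing), its minimizer over $\boldsymbol c$ concentrates on one component of $p$; I would make this rigorous by showing $\nabla_{\boldsymbol c}D_{\mathrm{KL}}(q\|p)$ vanishes for $\boldsymbol c$ near $\boldsymbol a$ (and near $\boldsymbol b$), so the KL--regularized ELBO has a collapsed optimum. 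For the PHD, the overlap integral $\int p^{\gamma/\alpha}q^{\gamma/\beta}$ with $\gamma<1$ upweights low--density regions, so dropping a component of $p$ costs an amount of order $-\log(\text{component weight})$ that does not vanish; choosing $\gamma$ small and $\alpha$ moderate, I would show $\nabla_{\boldsymbol c}D_{\alpha,\gamma}^{\mathrm H}(q\|p)\neq 0$ at the collapsed points, hence the PHD--regularized objective has no collapsed stationary point. The quantitative handle is the curvature of $F$ at scale $\gamma$: differentiating the closed form, the Hessian of the penalty in $\mu$ is $\tfrac{\gamma^{2}}{\alpha}\nabla^{2}F(\gamma\mu)$ minus a cross term, and varying $\gamma$ rescales this curvature, which is exactly the extra ``flexibility'' over KL; Lemmas~\ref{lemma_1}--\ref{lemma_2} supply the needed closed forms for both the asymmetric and symmetrized versions.

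The main obstacle is that ``avoiding mode collapse'' is not an \emph{a priori} formal statement, so the real content is choosing the right formalization and then carrying out a somewhat delicate landscape analysis: one must exclude collapsed stationary points \emph{uniformly} over a window of $(\alpha,\gamma)$ rather than at isolated points, and one must be candid that the closed form applies to a single Dirichlet, so ``multi--modal'' must be realized either through a Dirichlet \emph{mixture} target (sacrificing the clean closed form and forcing bounding or numerical arguments for the mixture PHD) or on higher--dimensional simplices where a single Dirichlet with sub--unit concentrations is itself multi--modal. A secondary technical point is the $\alpha\to1$ expansion, where the leading term cancels and the KL limit appears only at second order, so the bookkeeping must be done carefully to make the ``KL is a boundary member'' claim precise. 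Once these choices are fixed, the remaining steps reduce to routine convexity and gradient computations with $F(\theta)=\sum_{k}\log\Gamma(\theta_{k}+1)-\log\Gamma\!\left(\sum_{k}(\theta_{k}+1)\right)$.
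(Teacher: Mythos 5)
Your proposal and the paper's proof share a target but differ sharply in ambition. The paper's argument is, in substance, the informal core of your part (ii) and nothing more: it writes down $D_{KL}(q\|p)$ and the closed form $\frac{1}{\alpha}F(\gamma\theta_q)+\frac{1}{\beta}F(\gamma\theta_p)-F\left(\frac{\gamma}{\alpha}\theta_q+\frac{\gamma}{\beta}\theta_p\right)$, invokes the two-component Gaussian mixture to recall that the reverse KL is zero-forcing and hence prone to locking onto a single component, and then asserts that because $\alpha,\beta,\gamma$ are tunable the PHD ``can'' capture both modes. No limit computation, no stationary-point analysis, and no quantitative comparison appears; the theorem is argued at the level of a remark. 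Your part (i) --- recovering a ($\gamma$-scaled) KL/Bregman term from the PHD closed form in the limit $\alpha\to1^{+}$, $\beta\to\infty$ --- is absent from the paper and is in fact the one piece of the claim that can be made fully rigorous: expanding $\frac{1}{\alpha}F(\gamma\theta_p)+\frac{1}{\beta}F(\gamma\theta_q)-F\left(\gamma\theta_p+\frac{\gamma}{\beta}(\theta_q-\theta_p)\right)$ to first order in $1/\beta$ yields $\frac{1}{\beta}B_F(\gamma\theta_q:\gamma\theta_p)$, the Bregman divergence of the log-normalizer, which is exactly a KL divergence between the corresponding $\gamma$-dilated exponential-family members. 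Carrying that out would genuinely establish ``at least as flexible'' and is a real improvement over what the paper offers.

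The gap is in part (ii), and you have largely named it yourself: ``avoiding mode collapse'' is never formalized, and the closed form of Lemma~\ref{lemma_1} requires both arguments to lie in the same exponential family, so it does not apply once the target is a Dirichlet or Gaussian mixture --- precisely the setting in which mode collapse is at issue. Your claim that the overlap integral $\int p^{\gamma/\alpha}q^{\gamma/\beta}$ penalizes a dropped component by a non-vanishing amount of order $-\log(\text{component weight})$ is plausible but unproven, and the uniform-in-$(\alpha,\gamma)$ exclusion of collapsed stationary points is a nontrivial landscape analysis that neither you nor the paper carries out. So be aware that your proposal, if completed, would prove strictly more than the paper does, but as written both arguments stop at the same place; the difference is that you acknowledge the missing formalization while the paper presents the heuristic as if it were the proof. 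If your aim is merely to match the paper, part (i) plus the qualitative mixture discussion already exceeds it; if you want the theorem to deserve the name, part (ii) is the work that remains.
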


\begin{proof}
    The KLD between two probability distributions $p(z)$ and $q(z|x)$ is given by: $ D_{KL}(q(z|x) \parallel p(z)) = \int q(z|x) \log \frac{q(z|x)}{p(z)} \, \text{d}z.$ For the PHD:
    \begin{equation}
        \begin{array}{l}
        D_{\alpha ,\gamma }^H(q(z|x)\parallel p(z))
        \begin{array}{*{20}{c}}
        \end{array} = \left( \begin{array}{l}
        \frac{1}{\alpha }F(\gamma {\theta _{q(z|x)}}) + \frac{1}{\beta }F(\gamma {\theta _{p(z)}})\\
         - F\left( {\frac{\gamma }{\alpha }{\theta _{q(z|x)}} + \frac{\gamma }{\beta }{\theta _{p(z)}}} \right)
        \end{array} \right),
        \end{array}
    \end{equation}
    where $\alpha, \beta, \gamma > 0$ and $F(\theta)$ is the log-normalizer function.

    The KLD is known to be biased towards the mean of the distribution, often leading to mode collapse where the variational distribution fails to capture all modes of the true posterior distribution. The PHD, on the other hand, offers more flexibility due to the tunable parameters $\alpha, \beta, \gamma$.

    Consider a mixture of two Gaussians as an example: $ p(z) = \frac{1}{2} \mathcal{N}(z; \mu_1, \sigma_1^2) + \frac{1}{2} \mathcal{N}(z; \mu_2, \sigma_2^2).$ When using the KLD in a variational inference framework, the variational distribution $q(z|x)$ might converge to a single mode, say $\mathcal{N}(z; \mu_1, \sigma_1^2)$, ignoring the second mode $\mathcal{N}(z; \mu_2, \sigma_2^2)$. This happens because the KLD penalizes deviations from the mean.

    In contrast, using the PHD, the flexibility provided by the parameters $\alpha, \beta, \gamma$ allows $q(z|x)$ to capture both modes. By adjusting these parameters, we can control the trade-off between fitting the modes and the spread of the distribution, thus preventing mode collapse.

    Let's denote the log-normalizer functions for the two distributions as $F(\theta_{q(z|x)})$ and $F(\theta_{p(z)})$.
    For the PHD, we have:
    \begin{equation}
        \begin{array}{l}
        D_{\alpha ,\gamma }^H(q(z|x)\parallel p(z))
        \begin{array}{*{20}{c}}
        \end{array} = \left( \begin{array}{l}
        \frac{1}{\alpha }F(\gamma {\theta _{q(z|x)}}) + \frac{1}{\beta }F(\gamma {\theta _{p(z)}})\\
         - F\left( {\frac{\gamma }{\alpha }{\theta _{q(z|x)}} + \frac{\gamma }{\beta }{\theta _{p(z)}}} \right)
        \end{array} \right).
        \end{array}
    \end{equation}
    By adjusting the parameters $\alpha, \beta$, and $\gamma$, we can ensure that the divergence measures the entire distribution rather than collapsing to a single mode. This flexibility is particularly beneficial in variational inference frameworks, where capturing multi-modal distributions is crucial.
\end{proof}

	\ifCLASSOPTIONcaptionsoff
	\newpage
	\fi
	
	\bibliographystyle{IEEEtran}
	\bibliography{IEEEabrv,references.bib}

\begin{thebibliography}{10}
\providecommand{\url}[1]{#1}
\csname url@samestyle\endcsname
\providecommand{\newblock}{\relax}
\providecommand{\bibinfo}[2]{#2}
\providecommand{\BIBentrySTDinterwordspacing}{\spaceskip=0pt\relax}
\providecommand{\BIBentryALTinterwordstretchfactor}{4}
\providecommand{\BIBentryALTinterwordspacing}{\spaceskip=\fontdimen2\font plus
\BIBentryALTinterwordstretchfactor\fontdimen3\font minus \fontdimen4\font\relax}
\providecommand{\BIBforeignlanguage}[2]{{%
\expandafter\ifx\csname l@#1\endcsname\relax
\typeout{** WARNING: IEEEtran.bst: No hyphenation pattern has been}%
\typeout{** loaded for the language `#1'. Using the pattern for}%
\typeout{** the default language instead.}%
\else
\language=\csname l@#1\endcsname
\fi
#2}}
\providecommand{\BIBdecl}{\relax}
\BIBdecl

\bibitem{p18}
Z.~Han, C.~Zhang, H.~Fu, and J.~T. Zhou, ``{Trusted multi-view classification with dynamic evidential fusion},'' \emph{IEEE Transactions on Pattern Analysis and Machine Intelligence}, vol.~45, no.~2, pp. 2551--2566, 2022.

\bibitem{p4}
G.~Chao, Y.~Jiang, and D.~Chu, ``{Incomplete contrastive multi-view clustering with high-confidence guiding},'' in \emph{Proceedings of the AAAI Conference on Artificial Intelligence}, vol.~38, no.~10, 2024, pp. 11\,221--11\,229.

\bibitem{p87}
M.~Xie, Z.~Han, C.~Zhang, Y.~Bai, and Q.~Hu, ``{Exploring and Exploiting Uncertainty for Incomplete Multi-View Classification},'' in \emph{{IEEE/CVF} Conference on Computer Vision and Pattern Recognition, {CVPR} 2023, Vancouver, BC, Canada, June 17-24, 2023}.\hskip 1em plus 0.5em minus 0.4em\relax {IEEE}, 2023, pp. 19\,873--19\,882.

\bibitem{p14}
X.~Wan, B.~Xiao, X.~Liu, J.~Liu, W.~Liang, and E.~Zhu, ``Fast continual multi-view clustering with incomplete views,'' \emph{IEEE Transactions on Image Processing}, vol.~33, pp. 2995--3008, 2024.

\bibitem{p15}
S.~Wang, C.~Li, Y.~Li, Y.~Yuan, and G.~Wang, ``{Self-supervised information bottleneck for deep multi-view subspace clustering},'' \emph{IEEE Transactions on Image Processing}, vol.~32, pp. 1555--1567, 2023.

\bibitem{p17}
B.~Charpentier, D.~Z{\"u}gner, and S.~G{\"u}nnemann, ``{Posterior network: Uncertainty estimation without ood samples via density-based pseudo-counts},'' \emph{Advances in Neural Information Processing Systems}, vol.~33, pp. 1356--1367, 2020.

\bibitem{p82}
A.~Jsang, \emph{{Subjective Logic: A formalism for reasoning under uncertainty}}.\hskip 1em plus 0.5em minus 0.4em\relax Springer Publishing Company, Incorporated, 2018.

\bibitem{p26}
R.~R. Yager, ``{On the Dempster-Shafer framework and new combination rules},'' \emph{Information Sciences}, vol.~41, no.~2, pp. 93--137, 1987.

\bibitem{p19}
F.~Nielsen, K.~Sun, and S.~Marchand-Maillet, ``{On H{\"o}lder projective divergences},'' \emph{Entropy}, vol.~19, no.~3, p. 122, 2017.

\bibitem{p30}
Z.~Chen \emph{et~al.}, ``{Bayesian filtering: From Kalman filters to particle filters, and beyond},'' \emph{Statistics}, vol. 182, no.~1, pp. 1--69, 2003.

\bibitem{p20}
I.~Csisz{\'a}r, ``{I-divergence geometry of probability distributions and minimization problems},'' \emph{The Annals of Probability}, pp. 146--158, 1975.

\bibitem{li2021self}
M.~Li, X.~Huang, and Z.~Zhang, ``Self-supervised geometric features discovery via interpretable attention for vehicle re-identification and beyond,'' in \emph{ICCV}, 2021.

\bibitem{li2021exploiting}
M.~Li, J.~Liu, C.~Zheng, X.~Huang, and Z.~Zhang, ``Exploiting multi-view part-wise correlation via an efficient transformer for vehicle re-identification,'' \emph{TOM}, 2021.

\bibitem{Yan_2025_CVPR}
W.~Yan, M.~Li, H.~Li, S.~Shao, and R.~T. Tan, ``Synthetic-to-real self-supervised robust depth estimation via learning with motion and structure priors,'' in \emph{Proceedings of the Computer Vision and Pattern Recognition Conference (CVPR)}, June 2025, pp. 21\,880--21\,890.

\bibitem{Liu_2025_CVPR}
S.~Liu, J.~Li, G.~Zhao, Y.~Zhang, X.~Meng, F.~R. Yu, X.~Ji, and M.~Li, ``Eventgpt: Event stream understanding with multimodal large language models,'' in \emph{Proceedings of the Computer Vision and Pattern Recognition Conference (CVPR)}, June 2025, pp. 29\,139--29\,149.

\bibitem{zhao2025favchat}
F.~Zhao, M.~Li, L.~Xu, W.~Jiang, J.~Gao, and D.~Yan, ``Favchat: Unlocking fine-grained facial video understanding with multimodal large language models,'' \emph{arXiv preprint arXiv:2503.09158}, 2025.

\bibitem{li2025uni}
J.~Li, X.~Qiu, L.~Xu, L.~Guo, D.~Qu, T.~Long, C.~Fan, and M.~Li, ``Unif2ace: Fine-grained face understanding and generation with unified multimodal models,'' \emph{arXiv preprint arXiv:2503.08120}, 2025.

\bibitem{li2023dr}
M.~Li, H.~Fu, S.~He, H.~Fan, J.~Liu, J.~Keppo, and M.~Z. Shou, ``Dr-fer: Discriminative and robust representation learning for facial expression recognition,'' \emph{IEEE Transactions on Multimedia}, vol.~26, pp. 6297--6309, 2023.

\bibitem{li2023stprivacy}
M.~Li, X.~Xu, H.~Fan, P.~Zhou, J.~Liu, J.-W. Liu, J.~Li, J.~Keppo, M.~Z. Shou, and S.~Yan, ``Stprivacy: Spatio-temporal privacy-preserving action recognition,'' in \emph{ICCV}, 2023.

\bibitem{li2024instant3d}
M.~Li, P.~Zhou, J.-W. Liu, J.~Keppo, M.~Lin, S.~Yan, and X.~Xu, ``Instant3d: instant text-to-3d generation,'' \emph{IJCV}, 2024.

\bibitem{liu2024realera}
Y.~Liu, J.~An, W.~Zhang, M.~Li, D.~Wu, J.~Gu, Z.~Lin, and W.~Wang, ``Realera: Semantic-level concept erasure via neighbor-concept mining,'' \emph{arXiv preprint arXiv:2410.09140}, 2024.

\bibitem{zhuang2025vistorybench}
C.~Zhuang, A.~Huang, W.~Cheng, J.~Wu, Y.~Hu, J.~Liao, Z.~Huang, H.~Wang, X.~Liao, W.~Cai \emph{et~al.}, ``Vistorybench: Comprehensive benchmark suite for story visualization,'' \emph{arXiv preprint arXiv:2505.24862}, 2025.

\bibitem{p62}
C.~Xu, J.~Si, Z.~Guan, W.~Zhao, Y.~Wu, and X.~Gao, ``{Reliable conflictive multi-view learning},'' in \emph{Proceedings of the AAAI Conference on Artificial Intelligence}, vol.~38, no.~14, 2024, pp. 16\,129--16\,137.

\bibitem{p63}
X.~Wan, J.~Liu, X.~Gan, X.~Liu, S.~Wang, Y.~Wen, T.~Wan, and E.~Zhu, ``{One-Step Multi-View Clustering With Diverse Representation},'' \emph{IEEE Transactions on Neural Networks and Learning Systems}, vol.~36, no.~3, pp. 5774--5786, 2025.

\bibitem{p22}
R.~Sanghavi and Y.~Verma, ``{Multi-view multi-label canonical correlation analysis for cross-modal matching and retrieval},'' in \emph{Proceedings of the IEEE/CVF Conference on Computer Vision and Pattern Recognition}, 2022, pp. 4701--4710.

\bibitem{p25}
H.~Li, S.~Wang, B.~Liu, M.~Fang, R.~Cao, B.~He, S.~Liu, C.~Hu, D.~Dong, X.~Wang \emph{et~al.}, ``{A multi-view co-training network for semi-supervised medical image-based prognostic prediction},'' \emph{Neural Networks}, vol. 164, pp. 455--463, 2023.

\bibitem{p28}
X.~Li, A.~Dick, C.~Shen, Z.~Zhang, A.~van~den Hengel, and H.~Wang, ``{Visual tracking with spatio-temporal Dempster--Shafer information fusion},'' \emph{IEEE Transactions on Image Processing}, vol.~22, no.~8, pp. 3028--3040, 2013.

\bibitem{p64}
H.~Luo, Q.~Zhou, Z.~Li, and Y.~Deng, ``{Variational Quantum Linear Solver-based Combination Rules in Dempster--Shafer Theory},'' \emph{Information Fusion}, vol. 102, p. 102070, 2024.

\bibitem{p29}
T.~Kanamori, ``{Scale-invariant divergences for density functions},'' \emph{Entropy}, vol.~16, no.~5, pp. 2611--2628, 2014.

\bibitem{p66}
H.~G. Hoang, B.-N. Vo, B.-T. Vo, and R.~Mahler, ``{The Cauchy--Schwarz divergence for Poisson point processes},'' \emph{IEEE Transactions on Information Theory}, vol.~61, no.~8, pp. 4475--4485, 2015.

\bibitem{p67}
F.~Nielsen, K.~Sun, and S.~Marchand-Maillet, ``{K-means clustering with H{\"o}lder divergences},'' in \emph{Geometric Science of Information: Third International Conference, GSI 2017, Paris, France, November 7-9, 2017, Proceedings 3}.\hskip 1em plus 0.5em minus 0.4em\relax Springer, 2017, pp. 856--863.

\bibitem{p68}
T.~Pan, D.~Peng, X.~Yang, P.~Huang, and W.~Yang, ``{Classification for Polsar image based on h{\"o}lder divergences},'' \emph{The Journal of Engineering}, vol. 2019, no.~21, pp. 7593--7596, 2019.

\bibitem{p31}
Y.~Li, B.~Wahlberg, X.~Hu, and L.~Xie, ``{Inverse kalman filtering problems for discrete-time systems},'' \emph{Automatica}, vol. 163, p. 111560, 2024.

\bibitem{p70}
V.~Venkataraman, G.~Fan, J.~P. Havlicek, X.~Fan, Y.~Zhai, and M.~B. Yeary, ``{Adaptive kalman filtering for histogram-based appearance learning in infrared imagery},'' \emph{IEEE Transactions on Image Processing}, vol.~21, no.~11, pp. 4622--4635, 2012.

\bibitem{p71}
R.~Ma, H.~Hu, S.~Xing, and Z.~Li, ``{Efficient and fast real-world noisy image denoising by combining pyramid neural network and two-pathway unscented Kalman filter},'' \emph{IEEE Transactions on Image Processing}, vol.~29, pp. 3927--3940, 2020.

\bibitem{p72}
S.~Feng, K.~Hu, E.~Fan, L.~Zhao, and C.~Wu, ``{Kalman filter for spatial-temporal regularized correlation filters},'' \emph{IEEE Transactions on Image Processing}, vol.~30, pp. 3263--3278, 2021.

\bibitem{p32}
J.~Aitchison, ``{The statistical analysis of compositional data},'' \emph{Journal of the Royal Statistical Society: Series B (Methodological)}, vol.~44, no.~2, pp. 139--160, 1982.

\bibitem{p33}
K.~W. Ng, G.-L. Tian, and M.-L. Tang, ``{Dirichlet and related distributions: Theory, methods and applications},'' 2011.

\bibitem{p34}
C.~M. Bishop and N.~M. Nasrabadi, \emph{{Pattern recognition and machine learning}}.\hskip 1em plus 0.5em minus 0.4em\relax Springer, 2006, vol.~4, no.~4.

\bibitem{p83}
C.~Elkan, ``{Clustering documents with an exponential-family approximation of the Dirichlet compound multinomial distribution},'' in \emph{Proceedings of the 23rd International Conference on Machine Learning}, 2006, pp. 289--296.

\bibitem{p37}
A.~Jsang, \emph{{Subjective Logic: A formalism for reasoning under uncertainty}}.\hskip 1em plus 0.5em minus 0.4em\relax Springer Publishing Company, Incorporated, 2018.

\bibitem{p38}
G.~Shafer, ``{Belief functions and parametric models},'' \emph{Journal of the Royal Statistical Society Series B: Statistical Methodology}, vol.~44, no.~3, pp. 322--339, 1982.

\bibitem{p39}
L.~Fidon, M.~Aertsen, F.~Kofler, A.~Bink, A.~L. David, T.~Deprest, D.~Emam, F.~Guffens, A.~Jakab, G.~Kasprian, P.~Kienast, A.~Melbourne, B.~Menze, N.~Mufti, I.~Pogledic, D.~Prayer, M.~Stuempflen, E.~Van~Elslander, S.~Ourselin, J.~Deprest, and T.~Vercauteren, ``{A Dempster-Shafer Approach to Trustworthy AI With Application to Fetal Brain MRI Segmentation},'' \emph{IEEE Transactions on Pattern Analysis and Machine Intelligence}, vol.~46, no.~5, pp. 3784--3795, 2024.

\bibitem{p88}
Z.~Han, F.~Yang, J.~Huang, C.~Zhang, and J.~Yao, ``{Multimodal Dynamics: Dynamical Fusion for Trustworthy Multimodal Classification},'' in \emph{{IEEE/CVF} Conference on Computer Vision and Pattern Recognition, {CVPR} 2022, New Orleans, LA, USA, June 18-24, 2022}.\hskip 1em plus 0.5em minus 0.4em\relax {IEEE}, 2022, pp. 20\,675--20\,685.

\bibitem{p65}
L.~Tran, M.~Pantic, and M.~P. Deisenroth, ``{Cauchy--Schwarz Regularized Autoencoder},'' \emph{Journal of Machine Learning Research}, vol.~23, no. 115, pp. 1--37, 2022.

\bibitem{p40}
S.~Song, S.~P. Lichtenberg, and J.~Xiao, ``{Sun RGB-D: A RGB-D scene understanding benchmark suite},'' in \emph{Proceedings of the IEEE Conference on Computer Vision and Pattern Recognition}, 2015, pp. 567--576.

\bibitem{p41}
N.~Silberman, D.~Hoiem, P.~Kohli, and R.~Fergus, ``{Indoor segmentation and support inference from rgbd images},'' in \emph{Computer Vision--ECCV 2012: 12th European Conference on Computer Vision, Florence, Italy, October 7-13, 2012, Proceedings, Part V 12}.\hskip 1em plus 0.5em minus 0.4em\relax Springer, 2012, pp. 746--760.

\bibitem{p42}
B.~Zhou, H.~Zhao, X.~Puig, T.~Xiao, S.~Fidler, A.~Barriuso, and A.~Torralba, ``{Semantic understanding of scenes through the ade20k dataset},'' \emph{International Journal of Computer Vision}, vol. 127, pp. 302--321, 2019.

\bibitem{p43}
A.~Dai, A.~X. Chang, M.~Savva, M.~Halber, T.~Funkhouser, and M.~Nie{\ss}ner, ``{Scannet: Richly-annotated 3D reconstructions of indoor scenes},'' in \emph{Proceedings of the IEEE Conference on Computer Vision and Pattern Recognition}, 2017, pp. 5828--5839.

\bibitem{p44}
F.~Nie, J.~Li, X.~Li \emph{et~al.}, ``{Parameter-free auto-weighted multiple graph learning: A framework for multiview clustering and semi-supervised classification.}'' in \emph{IJCAI}, vol.~9, 2016, pp. 1881--1887.

\bibitem{p73}
A.~Beck and M.~Teboulle, ``{A fast iterative shrinkage-thresholding algorithm for linear inverse problems},'' \emph{SIAM Journal on Imaging Sciences}, vol.~2, no.~1, pp. 183--202, 2009.

\bibitem{p74}
D.~Zou, X.~Chen, G.~Cao, and X.~Wang, ``{Unsupervised video matting via sparse and low-rank representation},'' \emph{IEEE Transactions on Pattern Analysis and Machine Intelligence}, vol.~42, no.~6, pp. 1501--1514, 2019.

\bibitem{p45}
F.~Nie, J.~Li, X.~Li \emph{et~al.}, ``{Self-weighted multiview clustering with multiple graphs.}'' in \emph{IJCAI}, 2017, pp. 2564--2570.

\bibitem{p84}
\BIBentryALTinterwordspacing
A.~DasGupta, \emph{{Probability for statistics and machine learning: fundamentals and advanced topics}}.\hskip 1em plus 0.5em minus 0.4em\relax Springer, 2011. [Online]. Available: \url{https://api.semanticscholar.org/CorpusID:124734892}
\BIBentrySTDinterwordspacing

\bibitem{p60}
S.~Wang, Z.~Chen, S.~Du, and Z.~Lin, ``{Learning deep sparse regularizers with applications to multi-view clustering and semi-supervised classification},'' \emph{IEEE Transactions on Pattern Analysis and Machine Intelligence}, vol.~44, no.~9, pp. 5042--5055, 2022.

\bibitem{p46}
K.~He, X.~Zhang, S.~Ren, and J.~Sun, ``{Deep residual learning for image recognition},'' in \emph{Proceedings of the IEEE Conference on Computer Vision and Pattern Recognition}, 2016, pp. 770--778.

\bibitem{p47}
J.~Deng, W.~Dong, R.~Socher, L.-J. Li, K.~Li, and L.~Fei-Fei, ``{ImageNet: A large-scale hierarchical image database},'' in \emph{2009 IEEE Conference on Computer Vision and Pattern Recognition}.\hskip 1em plus 0.5em minus 0.4em\relax IEEE, 2009, pp. 248--255.

\bibitem{p78}
O.~Ronneberger, P.~Fischer, and T.~Brox, ``{U-net: Convolutional networks for biomedical image segmentation},'' in \emph{Medical image computing and computer-assisted intervention--MICCAI 2015: 18th international conference, Munich, Germany, October 5-9, 2015, proceedings, part III 18}.\hskip 1em plus 0.5em minus 0.4em\relax Springer, 2015, pp. 234--241.

\bibitem{p79}
G.~Huang, Z.~Liu, L.~Van Der~Maaten, and K.~Q. Weinberger, ``{Densely connected convolutional networks},'' in \emph{Proceedings of the IEEE Conference on Computer Vision and Pattern Recognition}, 2017, pp. 4700--4708.

\bibitem{p80}
A.~Dosovitskiy, L.~Beyer, A.~Kolesnikov, D.~Weissenborn, X.~Zhai, T.~Unterthiner, M.~Dehghani, M.~Minderer, G.~Heigold, S.~Gelly, J.~Uszkoreit, and N.~Houlsby, ``{An Image is Worth 16x16 Words: Transformers for Image Recognition at Scale},'' in \emph{9th International Conference on Learning Representations, {ICLR} 2021, Virtual Event, Austria, May 3-7, 2021}.\hskip 1em plus 0.5em minus 0.4em\relax OpenReview.net, 2021.

\bibitem{p81}
D.~Y. Fu, T.~Dao, K.~K. Saab, A.~W. Thomas, A.~Rudra, and C.~R{\'{e}}, ``{Hungry Hungry Hippos: Towards Language Modeling with State Space Models},'' in \emph{The Eleventh International Conference on Learning Representations, {ICLR} 2023, Kigali, Rwanda, May 1-5, 2023}.\hskip 1em plus 0.5em minus 0.4em\relax OpenReview.net, 2023.

\bibitem{p48}
D.~P. Kingma and J.~Ba, ``{Adam: A Method for Stochastic Optimization},'' in \emph{3rd International Conference on Learning Representations, {ICLR} 2015, San Diego, CA, USA, May 7-9, 2015, Conference Track Proceedings}, 2015.

\bibitem{p49}
A.~Paszke, S.~Gross, F.~Massa, A.~Lerer, J.~Bradbury, G.~Chanan, T.~Killeen, Z.~Lin, N.~Gimelshein, L.~Antiga \emph{et~al.}, ``{Pytorch: An imperative style, high-performance deep learning library},'' \emph{Advances in Neural Information Processing Systems}, vol.~32, 2019.

\bibitem{p50}
D.~Du, L.~Wang, H.~Wang, K.~Zhao, and G.~Wu, ``{Translate-to-recognize networks for RGB-D scene recognition},'' in \emph{Proceedings of the IEEE/CVF Conference on Computer Vision and Pattern Recognition}, 2019, pp. 11\,836--11\,845.

\bibitem{p51}
X.~Song, S.~Jiang, B.~Wang, C.~Chen, and G.~Chen, ``{Image representations with spatial object-to-object relations for RGB-D scene recognition},'' \emph{IEEE Transactions on Image Processing}, vol.~29, pp. 525--537, 2019.

\bibitem{p52}
A.~Ayub and A.~R. Wagner, ``{Centroid Based Concept Learning for RGB-D Indoor Scene Classification},'' in \emph{31st British Machine Vision Conference 2020, {BMVC} 2020, Virtual Event, UK, September 7-10, 2020}.\hskip 1em plus 0.5em minus 0.4em\relax {BMVA} Press, 2020.

\bibitem{p53}
D.~Du, L.~Wang, Z.~Li, and G.~Wu, ``{Cross-modal pyramid translation for RGB-D scene recognition},'' \emph{International Journal of Computer Vision}, vol. 129, no.~8, pp. 2309--2327, 2021.

\bibitem{p54}
A.~Caglayan, N.~Imamoglu, A.~B. Can, and R.~Nakamura, ``{When CNNs meet random RNNs: Towards multi-level analysis for RGB-D object and scene recognition},'' \emph{Computer Vision and Image Understanding}, vol. 217, p. 103373, 2022.

\bibitem{p85}
Y.~Zhang, W.~Zhou, L.~Ye, L.~Yu, and T.~Luo, ``{FGMNet: Feature grouping mechanism network for RGB-D indoor scene semantic segmentation},'' \emph{Digital Signal Processing}, vol. 149, p. 104480, 2024.

\bibitem{p86}
W.~Zhou, B.~Jian, and Y.~Liu, ``Feature contrast difference and enhanced network for rgb-d indoor scene classification in internet of things,'' \emph{IEEE Internet of Things Journal}, vol.~12, no.~11, pp. 17\,610--17\,621, 2025.

\bibitem{p61}
T.~Kanungo, D.~M. Mount, N.~S. Netanyahu, C.~D. Piatko, R.~Silverman, and A.~Y. Wu, ``{An efficient k-means clustering algorithm: Analysis and implementation},'' \emph{IEEE Transactions on Pattern Analysis and Machine Intelligence}, vol.~24, no.~7, pp. 881--892, 2002.

\bibitem{p55}
F.~Nie, J.~Li, X.~Li \emph{et~al.}, ``{Self-weighted multiview clustering with multiple graphs.}'' in \emph{IJCAI}, 2017, pp. 2564--2570.

\bibitem{p56}
F.~Nie, G.~Cai, and X.~Li, ``{Multi-view clustering and semi-supervised classification with adaptive neighbours},'' in \emph{Proceedings of the AAAI Conference on Artificial Intelligence}, vol.~31, no.~1, 2017.

\bibitem{p58}
K.~Zhan, F.~Nie, J.~Wang, and Y.~Yang, ``{Multiview consensus graph clustering},'' \emph{IEEE Transactions on Image Processing}, vol.~28, no.~3, pp. 1261--1270, 2018.

\bibitem{p59}
Z.~Zhang, L.~Liu, F.~Shen, H.~T. Shen, and L.~Shao, ``{Binary multi-view clustering},'' \emph{IEEE Transactions on Pattern Analysis and Machine Intelligence}, vol.~41, no.~7, pp. 1774--1782, 2018.

\bibitem{p57}
X.~Wang, Z.~Lei, X.~Guo, C.~Zhang, H.~Shi, and S.~Z. Li, ``{Multi-view subspace clustering with intactness-aware similarity},'' \emph{Pattern Recognition}, vol.~88, pp. 50--63, 2019.

\bibitem{p76}
Z.~Ma, J.~Yu, L.~Wang, H.~Chen, Y.~Zhao, X.~He, Y.~Wang, and Y.~Song, ``{Multi-view clustering based on view-attention driven},'' \emph{International Journal of Machine Learning and Cybernetics}, vol.~14, no.~8, pp. 2621--2631, 2023.

\bibitem{p75}
Z.~Kong, Z.~Fu, D.~Chang, Y.~Wang, and Y.~Zhao, ``{One for all: A novel Dual-space Co-training baseline for Large-scale Multi-View Clustering},'' \emph{arXiv preprint arXiv:2401.15691}, 2024.

\bibitem{p77}
Z.~Jiang, T.~Luo, and X.~Liang, ``{Deep Incomplete Multi-View Learning Network with Insufficient Label Information},'' in \emph{Proceedings of the AAAI Conference on Artificial Intelligence}, vol.~38, no.~11, 2024, pp. 12\,919--12\,927.

\end{thebibliography}
 
\end{document}